\definecolor{mygreen}{rgb}{0,0.6,0}
\definecolor{mygray}{rgb}{0.5,0.5,0.5}
\definecolor{mymauve}{rgb}{0.58,0,0.82}
\definecolor{cadmiumgreen}{rgb}{0.0, 0.42, 0.24}
\ttfamily\color{mygreen}\bfseries,
\newcommand{\ignore}[1]{}
\newcommand{\fd}{\rightarrow}
\newcommand{\real}{{\mathbb{R}}}
\newcommand*{\rom}[1]{\expandafter\@slowromancap\romannumeral #1@}
\newcommand{\RNum}[1]{\uppercase\expandafter{\romannumeral #1\relax}}
\newtheorem{defn}{Definition}[section]
\newtheorem{example}[defn]{Example}
\newtheorem{corr}[defn]{Corollary}
\newtheorem{lemma}[defn]{Lemma}
\newtheorem{thm}[defn]{Theorem}
\newtheorem{theorem}[defn]{Theorem}
\newcommand{\sel}[1]{{\sigma}}
\newcommand{\cut}[1]{}
\newcommand{\eat}[1]{}
\newcommand{\defeq}{\stackrel{\text{def}}{=}}
\def\set#1{\mathord{\{#1\}}}
\def\iset{\mathrm{m}}
\def\isetc{\mathrm{m}^c}
\def\measure{\mu}
\def\imeasure{\mu^*}
\def\eqdef{\mathrel{\stackrel{\textsf{\tiny def}}{=}}}
\def\D{\mathcal{D}}
\def\e#1{\emph{#1}}
\newenvironment{citedtheorem}[1]
{\begin{thm}{\it\e{(#1)}}\,\,}
	{\end{thm}}
\newenvironment{citeddefn}[1]
{\begin{defn}{\it\e{(#1)}}\,\,}
	{\end{defn}}
\def\implies{\Rightarrow}
\newcommand{\pow}[1]{2^{{#1}}} 
\def\vplus{{+}}
\newenvironment{repeatresult}[2]
{\vskip0.5em\par\textsc{#1} #2.\em}
{\vskip1em}
\def\appendix{\par
	\section*{APPENDIX}
	\setcounter{section}{0}
	\setcounter{subsection}{0}
	\def\thesection{\Alph{section}} }
\def\entropicPlhdrl{\Gamma}
\def\iset{\mathrm{m}}
\def\isetc{\mathrm{m}^c}
\def\measure{\mu}
\def\imeasure{\mu^*}
\def\eqdef{\mathrel{\stackrel{\textsf{\tiny def}}{=}}}
\def\e#1{\emph{#1}}
\def\impliedCI{\tau}
\def\bd{\boldsymbol{d}}
\def\vars{\mathbf{var}}
\title{Approximate Implication with d-Separation}
\author{Batya Kenig\\
Technion, Israel Institute of Technology\\ Haifa, Israel\\
batyak@technion.ac.il
}
\begin{document}

\maketitle

\begin{abstract}
The graphical structure of Probabilistic Graphical Models (PGMs) encodes the conditional independence (CI) relations that hold in the modeled distribution. Graph algorithms, such as \e{d-separation}, use this structure to infer additional conditional independencies, and to query whether a specific CI holds in the distribution.\eat{These CIs are what make PGMs an efficient tool for representation, inference and learning in a wide range of applications.} 
The premise of all current systems-of-inference for deriving CIs in PGMs, is that the set of CIs used for the construction of the PGM hold \e{exactly}.
In practice, algorithms for extracting the structure of PGMs from data, discover \e{approximate CIs} that do not hold exactly in the distribution. In this paper, we ask how the error in this set propagates to the inferred CIs\eat{, or implied , discovered by current systems of inference} read off the graphical structure. More precisely, what guarantee can we provide on the inferred CI when the set of CIs that entailed it hold only approximately? It has recently been shown that in the general case, no such guarantee can be provided. We prove that such a guarantee exists for the set of CIs inferred in directed graphical models, making the \e{$d$-separation} algorithm a sound and complete system for inferring \e{approximate CIs}. We also prove an approximation guarantee for independence relations derived from \e{marginal} CIs.

\end{abstract}

\section{INTRODUCTION}
Conditional independencies (CI) are assertions of the form $X\bot Y|Z$, stating that the random variables (RVs) $X$ and $Y$ are independent when conditioned on $Z$. The concept of conditional independence is at the core of Probabilistic graphical Models (PGMs) that include Bayesian and Markov networks. The CI relations between the random variables \eat{are those that} enable the modular and low-dimensional representations of high-dimensional, multivariate distributions, and\eat{enable efficient} tame the complexity of inference and learning, which would otherwise be very inefficient~\cite{DBLP:books/daglib/0023091,DBLP:books/daglib/0066829}.

The \e{implication problem} is the task of determining whether a set of CIs termed \e{antecedents} logically entail another CI, called the \e{consequent}, and it has received considerable attention from both the AI and Database communities~\cite{DBLP:conf/ecai/PearlP86,DBLP:conf/uai/GeigerVP89,DBLP:journals/iandc/GeigerPP91,SAYRAFI2008221,DBLP:conf/icdt/KenigS20,DBLP:conf/sigmod/KenigMPSS20}. Known algorithms for deriving CIs from the topological structure of the graphical model are, in fact, an instance of implication. 
Notably, the DAG structure of Bayesian Networks is generated based on a set of CIs termed the \e{recursive basis}~\cite{DBLP:journals/networks/GeigerVP90}, and the $d$-separation algorithm is used to derive additional CIs, implied by this set.
The $d$-separation algorithm is a sound and complete method for deriving CIs in probability distributions represented by DAGs~\cite{DBLP:conf/uai/GeigerVP89, DBLP:journals/networks/GeigerVP90}, and hence completely characterizes the CIs that hold in the distribution. The foundation of deriving CIs in both directed and undirected models is the \e{semigraphoid axioms}~\cite{Dawid1979,GEIGER1991128,GeigerPearl1993}.
\eat{
More precisely, the soundness of $d$-separation was established by Verma, who also showed that the set of CIs derived by $d$-separation is precisely the closure of the recursive basis under the \e{semgraphoid axioms}~\cite{GEIGER1991128,GeigerPearl1993}. Then, Geiger and Pearl proved that the semigraphoid axioms are a complete system-of-inference for deriving CIs from the recursive basis, thus showing that $d$-separation is complete. }

Current systems for inferring CIs, and the semigraphoid axioms in particular, assume that both antecedents and consequent hold \e{exactly}, hence we refer to these as an exact implication (EI). 
However, almost all known approaches for learning the structure of a PGM rely on CIs extracted from data, which hold to a large degree, but cannot be expected to hold exactly. Of these, structure-learning approaches based on information theory have been shown to be particularly successful, and thus widely used to infer networks in many fields~\cite{CHENG200243,JMLR:v7:decampos06a,Chen2008TKDE,Zhao5130,DBLP:conf/sigmod/KenigMPSS20}. 

In this paper, we drop the assumption that the CIs hold exactly, and
consider the \e{relaxation
problem}: if an exact implication holds, does an \e{approximate implication} hold too? 
That is, if the antecedents approximately hold in the distribution, does the consequent approximately hold as well ?
What guarantees can we give for the approximation? In other words, the relaxation problem
asks whether we can convert an exact implication to an approximate one. When relaxation holds, then any system-of-inference for deriving exact implications, (e.g. the semigraphoid axioms, $d$-separation), can be used to infer approximate implications as well. 

To study the relaxation problem we need to measure the degree of satisfaction of
a CI. In line with previous work, we use Information Theory. This is the natural semantics for
modeling CIs because $X \bot Y | Z$ if and only if $I(X; Y |Z) = 0$, where $I$ is the
conditional mutual information. Hence, an exact implication (EI) $\sigma_1,\cdots,\sigma_k \implies \tau$ is an assertion of the form $(h(\sigma_1){=}0 \wedge  \cdots \wedge h(\sigma_k){=}0) \implies
h(\tau){=}0$, where 
$\tau,\sigma_1,\sigma_2,\dots$ are triples $(X;Y|Z)$, and
$h$ is the conditional mutual information measure $I(\cdot;\cdot|\cdot)$. An approximate implication (AI) is a linear inequality $h(\tau) \leq \lambda h(\Sigma)$, where $h(\Sigma)\eqdef\sum_{i=1}^kh(\sigma_i)$, and $\lambda \geq 0$ is the approximation factor. We say that a class of CIs \e{$\lambda$-relaxes} if every exact implication (EI) from the class can be transformed to an approximate implication (AI) with an approximation factor $\lambda$. We observe that an approximate implication always implies an exact implication\eat{, even if we do not have an upper bound on $\lambda$, but every concrete AI has a finite lambda This is} because the mutual information $I(\cdot;\cdot|\cdot)\geq 0$ is a nonnegative measure. Therefore, if $0\leq h(\tau)\leq \lambda h(\Sigma)$ for some $\lambda \geq 0$, then $h(\Sigma)=0 \implies h(\tau)=0$.
 \eat{\footnote{This is because $I(\cdot;\cdot|\cdot)\geq 0$ is a nonnegative measure. Therefore, if $0\leq h(\tau)\leq \lambda h(\Sigma)$, then $h(\Sigma)=0 \implies h(\tau)=0$. }. }

\textbf{Results.} 
A conditional independence assertion $(A;B|C)$ is called \e{saturated} if it mentions all of the random variables in the distribution, and it is called \e{marginal} if $C=\emptyset$. 

We show that every conditional independence relation $(X;Y|Z)$ read off a DAG by the d-separation algorithm~\cite{DBLP:conf/uai/GeigerVP89}, admits a $1$-approximation. In other words, if $\Sigma$ is the \e{recursive basis} of CIs used to build the Bayesian network~\cite{DBLP:conf/uai/GeigerVP89}, then it is guaranteed that $I(X;Y|Z)\leq \sum_{i\in \Sigma}h(\sigma_i)$. Furthermore, we present a family of implications for which our $1$-approximation is tight (i.e., $I(X;Y|Z)=\sum_{i\in \Sigma}h(\sigma_i)$). We also prove that every CI $(X;Y|Z)$ implied by a set of marginal CIs admits an  $|X|\cdot|Y|$-approximation (i.e., where $|X|$ denotes the number of RVs in the set $X$). The exact variant of implication from these classes of CIs were extensively studied ~\cite{DBLP:conf/uai/GeigerVP89, GeigerPearl1993,DBLP:conf/uai/GeigerP88,DBLP:journals/iandc/GeigerPP91,DBLP:journals/networks/GeigerVP90} (see below the related work). Here, we study their approximation.

Of independent interest is the technique used for proving the approximation guarantees. The \e{I-measure}~\cite{DBLP:journals/tit/Yeung91} is a theory which establishes a one-to-one correspondence between information theoretic measures such as entropy and mutual information (defined in Section~\ref{sec:notations}) and set theory. Ours is the first to apply this technique to the study of CI implication. 

\textbf{Related Work.}
The AI community has extensively studied
the exact implication problem for Conditional Independencies (CI).
In a series of papers, Geiger et al. showed that the \e{semigraphoid axioms}~\cite{DBLP:conf/ecai/PearlP86} are sound and complete for deriving CI statements that are implied by saturated CIs~\cite{GeigerPearl1993}, marginal CIs~\cite{GeigerPearl1993}, and \e{recursive CIs} that are used in Bayesian networks~\cite{DBLP:journals/networks/GeigerVP90,DBLP:conf/uai/GeigerP88}.
The completeness of $d$-separation follows from the fact that the set of CIs derived by $d$-separation is precisely the closure of the recursive basis under the \e{semgraphoid axioms}~\cite{VERMA199069}. Studen{\'{y}}
proved that in the general case, when no assumptions are made on the antecendents, no finite axiomatization exists~\cite{StudenyCINoCharacterization1990}. That is, there does not exist a finite set of axioms (deductive rules) from which all general conditional independence implications can be deduced. 

The database community has also studied the EI problem for integrity constraints~\cite{DBLP:journals/tods/ArmstrongD80,DBLP:conf/sigmod/BeeriFH77,10.1007/978-3-642-39992-3_17,Maier:1983:TRD:1097039},
and showed that the implication problem is decidable and axiomatizable when the antecedents are Functional
Dependencies or \e{Multivalued Dependencies} (which correspond to saturated CIs, see~\cite{DBLP:journals/tse/Lee87,DBLP:conf/icdt/KenigS20}), and undecidable
for \e{Embedded Multivalued Dependencies}~\cite{10.1006/inco.1995.1148}. 

The relaxation problem was first studied by Kenig and Suciu in the context of database dependencies~\cite{DBLP:conf/icdt/KenigS20}, where they showed that CIs derived from a set of saturated antecedents, admit an approximate implication. Importantly, they also showed that not all exact implications relax, and presented a family of 4-variable distributions along with an exact implication that does not admit an approximation (see Theorem 16 in~\cite{DBLP:conf/icdt/KenigS20}).  Consequently, it is not straightforward that exact implication necessarily imply its approximation counterpart, and arriving at meaningful approximation guarantees requires making certain assumptions on the antecedents, consequent, or both.

\textbf{Organization.}
We start in Section~\ref{sec:notations} with preliminaries. We formally define the relaxation problem in Section~\ref{sec:problem:def}, and formally state our results in Section~\ref{sec:results}. In Section~\ref{sec:lemmas} we establish, through a series of lemmas, properties of exact implication that will be used for proving our results. In Section~\ref{sec:recursiveProof} we prove that every implication from a set of recursive CIs admits a 1-relaxation, and in Section~\ref{sec:marginalProof} we prove that every implication $\Sigma\implies (X;Y|Z)$ from a set of marginal CIs admits an $|X||Y|$-relaxation. We conclude in Section~\ref{sec:conclusion}.
\section{Preliminaries}
\label{sec:notations}

We denote by $[n] = \set{1,2,\ldots,n}$.  If
$\Omega=\set{X_1,\ldots, X_n}$ denotes a set of variables and
$U, V \subseteq \Omega$, then we abbreviate the union $U \cup V$ with
$UV$. 

\subsection{Conditional Independence}
Recall that two discrete random variables $X, Y$ are called {\em
	independent} if $p(X=x, Y=y) = p(X=x)\cdot p(Y=y)$ for all outcomes
$x,y$. Fix $\Omega=\set{X_1,\dots,X_n}$, a set of $n$ jointly
distributed discrete random variables with finite domains
$\D_1,\dots,\D_n$, respectively; let $p$ be the probability mass.  For
$\alpha \subseteq [n]$, denote by $X_\alpha$ the joint random variable
$(X_i: i \in \alpha)$ with domain
$\D_\alpha \defeq \prod_{i \in \alpha} D_i$.  We write
$p \models X_\beta \perp X_\gamma | X_\alpha$ when $X_\beta, X_\gamma$
are conditionally independent given $X_\alpha$; in the special case
that $X_\alpha$ functionally determines $X_\beta$, we write $p \models X_\alpha \fd X_\beta$.
\eat{
$\beta=\gamma$, then $p \models X_\beta\perp X_\beta | X_\alpha$ iff
$X_\alpha$ functionally determines\footnote{This means:
	$\forall u \in D_\alpha$, if $p(X_\alpha = u) \neq 0$ then
	$\exists v \in D_\beta$ s.t. $p(X_\beta=v|X_\alpha=u)=1$, and $v$ is
	unique.}  $X_\beta$, and we write $p \models X_\alpha \fd X_\beta$.
}

An assertion $X {\perp} Y | Z$ is called a {\em Conditional
	Independence} statement, or a CI; this includes $Z \rightarrow Y$ as
a special case.  When $XYZ=\Omega$ we call it \e{saturated}, and when $Z=\emptyset$ we call it \e{marginal}.  A set of
CIs $\Sigma$ {\em implies} a CI $\tau$, in notation
$\Sigma \Rightarrow \tau$, if every probability distribution that
satisfies $\Sigma$ also satisfies $\tau$.
\eat{  
This implication problem
has also been extensively studied: 
In a series of papers, Geiger et al. showed that the \e{semigraphoid axioms}~\cite{DBLP:conf/ecai/PearlP86} are sound and complete for deriving CI statements that are implied by saturated CIs~\cite{GeigerPearl1993}, marginal CIs~\cite{GeigerPearl1993}, and \e{recursive CIs} that are used in Bayesian networks~\cite{DBLP:journals/networks/GeigerVP90,DBLP:conf/uai/GeigerP88}. Studen{\'{y}}
proved that in the general case, when no assumptions are made on the antecendents, no finite axiomatization exists~\cite{StudenyCINoCharacterization1990}.
}

\subsection{Background on Information Theory}

\label{subsec:information:theory}

We adopt required notation from the literature on information
theory~\cite{Yeung:2008:ITN:1457455}.  For $n > 0$, we
identify the functions
$\pow{[n]}\rightarrow \real$ with the vectors in $\real^{2^n}$.

\noindent {\bf Polymatroids.} A function\eat{\footnote{Most authors
	consider rather the space $\real^{2^n-1}$, by dropping
	$h(\emptyset)$ because it is always $0$.}} $h \in \real^{2^n}$ is
called a \emph{polymatroid} if $h(\emptyset)=0$ and satisfies the
following inequalities, called {\em Shannon inequalities}:
\begin{enumerate}
	\item Monotonicity: $h(A)\leq h(B)$ for $A \subseteq B$
	\item Submodularity: $h(A\cup B)+h(A\cap B)\leq h(A) + h(B)$ for all $A,B \subseteq [n]$
\end{enumerate}
The set of polymatroids is denoted $\Gamma_n \subseteq \real^{2^n}$.  For any polymatroid $h$ and subsets $A,B,C,D \subseteq [n]$, we define\footnote{Recall that $AB$
	denotes $A \cup B$.}
\begin{align}
	h(B|A) \eqdef &~h(AB) - h(A) \label{eq:h:cond} \\
	I_h(B;C|A) \eqdef &~h(AB) + h(AC) - h(ABC) - h(A) \label{eq:h:mutual:information}
\end{align}

Then, $\forall h\in \Gamma_n$, $I_h(B;C|A) \geq 0$ by submodularity, and
$h(B|A) \geq 0$ by monotonicity. We say that $A$ \e{functionally determines} $B$, in notation $A \fd B$ if $h(B|A)=0$. The \e{chain rule} is the identity:
\begin{equation} \label{eq:ChainRuleMI}
	I_h(B;CD|A)=I_h(B;C|A)+I_h(B;D|AC)
\end{equation}
We call the triple $(B;C|A)$ \e{elemental}
if $|B|=|C|=1$; $h(B|A)$ is a special case of $I_h$, because
$h(B|A) = I_h(B;B|A)$. By the chain rule, it follows that every CI $(B;C|A)$ can be written as a sum of at most $|B||C|\leq \frac{n^2}{4}$ elemental CIs.

\noindent {\bf Entropy.} If $X$ is a random variable with
a finite domain $\D$ and probability mass $p$, then $H(X)$ denotes its
entropy
\begin{equation}\label{eq:entropy}
	H(X)\eqdef\sum_{x\in \D}p(x)\log\frac{1}{p(x)}
\end{equation}
For a set of jointly distributed random variables
$\Omega=\set{X_1,\dots,X_n}$ we define the function
$h : \pow{[n]} \rightarrow \real$ as $h(\alpha) \eqdef H(X_\alpha)$;
$h$ is called an \e{entropic function}, or, with some abuse, an
\e{entropy}. It is easily verified that the entropy $H$ satisfies the Shannon inequalities, and is thus a polymatroid. 
The quantities $h(B|A)$ and $I_h(B;C|A)$ are called the \e{conditional
	entropy} and \e{conditional mutual information} respectively.  The
conditional independence $p \models B \perp C \mid A$ holds iff
$I_h(B;C|A)=0$, and similarly $p \models A \fd B$ iff $h(B|A)=0$,
thus, entropy provides us with an alternative characterization of
CIs.

\subsubsection{The I-measure}\label{sec:imeasure}
The I-measure~\cite{DBLP:journals/tit/Yeung91,Yeung:2008:ITN:1457455} is a theory which establishes a one-to-one correspondence between Shannon's information measures and set theory. 
Let $h\in \entropicPlhdrl_n$ denote a polymatroid defined over the variables $\set{X_1,\dots,X_n}$. Every variable $X_i$ is associated with a set $\iset(X_i)$, and it's complement $\isetc(X_i)$.
The universal set is $\Lambda \eqdef \bigcup_{i=1}^n\iset(X_i)$.
Let $\alpha \subseteq [n]$. We denote by $X_\alpha\eqdef\set{X_j \mid j \in \alpha}$, and $\iset(X_\alpha)\eqdef \bigcup_{i\in \alpha}\iset(X_i)$.

\begin{citeddefn}{\cite{DBLP:journals/tit/Yeung91,Yeung:2008:ITN:1457455}} \label{thm:YeungUniqueness}\label{def:field}
	The field $\mathcal{F}_n$ generated by sets $\iset(X_1),\dots,\iset(X_n)$ is the collection of sets which can be obtained by any sequence of usual set operations (union, intersection, complement, and difference) on $\iset(X_1),\dots,\iset(X_n)$.
\end{citeddefn}

The \e{atoms} of $\mathcal{F}_n$ are sets of the form $\bigcap_{i=1}^nY_i$, where $Y_i$ is either $\iset(X_i)$ or $\isetc(X_i)$. We denote by $\mathcal{A}$ the atoms of $\mathcal{F}_n$.
We consider only atoms in which at least one set appears in positive form (i.e., the atom $\bigcap_{i=1}^n\isetc(X_i)\eqdef \emptyset$ is empty).
There are $2^n-1$ non-empty atoms and $2^{2^n-1}$ sets in $\mathcal{F}_n$ expressed as the union of its atoms. 
A function $\measure:\mathcal{F}_n \rightarrow \real$ is \e{set additive} if for every pair of disjoint sets $A$ and $B$ it holds that $\measure(A\cup B)=\measure(A)\vplus\measure(B)$.
A real function $\measure$ defined on $\mathcal{F}_n$ is called a \e{signed measure} if it is set additive, and $\measure(\emptyset)=0$.

The $I$-measure $\imeasure$ on $\mathcal{F}_n$ is defined by $\imeasure(m(X_\alpha))=H(X_\alpha)$ for all nonempty subsets $\alpha \subseteq \set{1,\dots,n}$, where $H$ is the entropy~\eqref{eq:entropy}. 
Table~\ref{tab:ImeasureSummary} summarizes the extension of this definition to the rest of the Shannon measures.
\begin{table}[]
	\centering
	\small
	\begin{tabular}{|c|c|}	
		\hline
		Information & \multirow{2}{*}{$\imeasure$} \\
		Measures &  \\ \hline
		$H(X)$	& $\imeasure(\iset(X))$ \\ \hline
		$H(XY)$	& $\imeasure\left(\iset(X)\cup\iset(Y)\right)$ \\ \hline
		$H(X|Y)$	& $\imeasure\left(\iset(X)\cap \isetc(Y)\right)$ \\ \hline
		$I_H(X;Y)$ & $\imeasure\left(\iset(X)\cap \iset(Y)\right)$  \\ \hline
		$I_H(X;Y|Z)$	&  $\imeasure\left(\iset(X)\cap \iset(Y) \cap \isetc(Z)\right)$ \\ \hline
	\end{tabular}
	\vspace{0.2cm}
	\caption{Information measures and associated I-measure}
	\label{tab:ImeasureSummary}
\end{table}
Yeung's I-measure Theorem establishes the one-to-one correspondence between Shannon's information measures and $\imeasure$.
\begin{citedtheorem}{\cite{DBLP:journals/tit/Yeung91,Yeung:2008:ITN:1457455}} \label{thm:YeungUniqueness}\e{[I-Measure Theorem]}
	$\imeasure$ is the unique signed measure on $\mathcal{F}_n$ which is consistent with all Shannon's information measures (i.e., entropies, conditional entropies, and mutual information). 	
\end{citedtheorem}

Let $\sigma=(X;Y|Z)$. We denote by $\iset(\sigma)\eqdef\iset(X)\cap\iset(Y)\cap\isetc(Z)$ the set associated with $\sigma$ (see Table~\ref{tab:ImeasureSummary}). For a set of triples $\Sigma$, we define:
\begin{equation}
	\label{eq:SigmaSet}
\iset(\Sigma)\eqdef\bigcup_{\sigma \in \Sigma}\iset(\sigma)
\end{equation}
\begin{example}
Let $A$, $B$, and $C$ be three disjoint sets of RVs defined as follows: $A{=}A_1A_2A_3$, $B{=}B_1B_2$ and $C{=}C_1C_2$. Then, by Theorem~\ref{thm:YeungUniqueness}: $H(A){=}\mu^*(\iset(A)){=}\mu^*(\iset(A_1){\cup}\iset(A_2){\cup}\iset(A_3))$, $H(B){=}\mu^*(\iset(B)){=}\mu^*(\iset(B_1){\cup}\iset(B_2))$,~and $\mu^*(\isetc(C)){=}\mu^*(\isetc(C_1){\cap} \isetc(C_2))$. By Table~\ref{tab:ImeasureSummary}:  $I(A;B|C){=}\mu^*(\iset(A)\cap \iset(B)\cap \isetc(C))$.
\end{example}
We denote by $\Delta_n$ the set of signed measures $\mu^*:\mathcal{F}_n \rightarrow \real_{\geq 0}$ that assign non-negative values to the atoms $\mathcal{F}_n$. We call these \e{positive I-measures}.
\begin{citedtheorem}{\cite{Yeung:2008:ITN:1457455}}\label{thm:YeungBuildMeasure}
	If there is no constraint on $X_1,\dots,X_n$, then $\imeasure$ can take any set of nonnegative values on the nonempty atoms of $\mathcal{F}_n$.
\end{citedtheorem}	
Theorem~\ref{thm:YeungBuildMeasure} implies that every positive I-measure $\mu^*$ corresponds to a function that is consistent with the Shannon inequalities, and is thus a polymatroid. Hence, $\Delta_n\subset \Gamma_n$ is the set of polymatroids with a positive I-measure that we call \e{positive polymatroids}.

\subsection{Bayesian Networks}
A Bayesian network encodes the CIs of a probability distribution using a
Directed Acyclic Graph (DAG). Each node $X_i$ in a Bayesian network corresponds
to the variable $X_i\in \Omega$, a set of nodes $\alpha$ correspond to the set of variables $X_\alpha$, and $x_i \in \D_i$ is a
value from the domain of $X_i$. Each node $X_i$ in the network represents the distribution $p(X_i \mid X_{\pi(i)})$ where $X_{\pi(i)}$ is a set of variables that
correspond to the parent nodes $\pi(i)$ of $i$. The distribution represented by a Bayesian network is
\begin{equation}\label{eq:BN}
	p(x_1,\dots,x_n)=\prod_{i=1}^np(x_i| x_{\pi(i)})
\end{equation}
(when $i$ has no parents then $X_{\pi(i)}=\emptyset$).

Equation~\ref{eq:BN} implicitly encodes a set of $n$ conditional independence statements, called the \e{recursive basis} for the network:
\begin{equation}
	\label{eq:recursiveSet}
	\Sigma\eqdef\set{(X_i; X_1\dots X_{i-1}{\setminus} \pi(X_i) \mid \pi(X_i)): i\in [n]}
\end{equation}
The implication problem associated with Bayesian Networks is to determine whether $\Sigma \implies \tau$ for a CI $\tau$.
Geiger and Pearl have shown that $\Sigma \implies \tau$ iff $\tau$ can be derived from $\Sigma$ using the \e{semigraphoid axioms}~\cite{DBLP:journals/networks/GeigerVP90}.
Their result establishes that the semigraphoid axioms are  sound and complete for inferring CI statements from the recursive basis.
\eat{
 Since the semigraphoid axioms follow from the Shannon inequalities, this result says that the Shannon inequalities are both sound and complete for inferring CI statements from the recursive basis.
}

\section{The Relaxation Problem}
\label{sec:problem:def}

We now formally define the relaxation problem.  We fix a set of
variables $\Omega = \set{X_1, \ldots, X_n}$, and consider triples of
the form $\sigma = (Y;Z|X)$, where $X,Y,Z \subseteq \Omega$, which we
call a \e{conditional independence}, CI.  An \e{implication} is a
formula $\Sigma \implies \impliedCI$, where $\Sigma$ is a set of CIs
called \e{antecedents} and $\tau$ is a CI called \e{consequent}.  For
a CI $\sigma=(Y;Z|X)$, we define $h(\sigma)\eqdef I_h(Y;Z|X)$, for a
set of CIs $\Sigma$, we define
$h(\Sigma)\eqdef\sum_{\sigma \in \Sigma}h(\sigma)$.  Fix a set $K$
s.t.  $K \subseteq \Gamma_n$.

\begin{defn} \label{def:ei:ai} The \e{exact implication} (EI)
	$\Sigma \implies \impliedCI$ holds in $K$, denoted
	$K \models_{EI} (\Sigma \implies \impliedCI)$ if, forall $h \in K$,
	$h(\Sigma)=0$ implies $h(\tau)=0$.  The \e{$\lambda$-approximate
		implication} ($\lambda$-AI) holds in $K$, 
	in notation
	$K \models\lambda\cdot h(\Sigma) \geq h(\tau)$, if
	$\forall h \in K$, $\lambda\cdot h(\Sigma) \geq h(\tau)$. The
	\e{approximate implication} holds, in notation
	$K \models_{AI} (\Sigma \implies \impliedCI)$, if there exist a finite
	$\lambda \geq 0$ such that the $\lambda$-AI holds.	
\end{defn}
\eat{
We will sometimes consider an equivalent definition for AI, as
$\sum_{\sigma \in \Sigma} \lambda_\sigma h(\sigma) \geq h(\tau)$,
where $\lambda_\sigma \geq 0$ are coefficients, one for each
$\sigma \in \Sigma$; these two definitions are equivalent, by taking
$\lambda = \max_\sigma \lambda_\sigma$.}Notice that both exact (EI) and approximate (AI) implications
are preserved under subsets of $K$: if
$K_1 {\subseteq} K_2$ and $K_2 {\models_x} (\Sigma {\implies} \impliedCI)$, then
$K_1 {\models_x} (\Sigma {\implies} \impliedCI)$, for
$x {\in} \set{EI,AI}$.

Approximate implication always implies its exact counterpart.  Indeed, if $h(\tau) \leq \lambda \cdot h(\Sigma)$
and $h(\Sigma)=0$, then $h(\tau)\leq 0$, which further implies that
$h(\tau)=0$, because $h(\tau)\geq 0$ for every triple $\tau$, and every
polymatroid $h$.  In this paper we study the reverse.

\begin{defn}
	Let $\mathcal{L}$ be a syntactically-defined class of implication
	statements $(\Sigma \Rightarrow \tau)$, and let
	$K \subseteq \Gamma_n$.  We say that $\mathcal{L}$ \e{admits a
		$\lambda$-relaxation} in $K$, if every exact implication statement
	$(\Sigma \Rightarrow \impliedCI)$ in $\mathcal{L}$ has a $\lambda$-approximation:
	$$K\models_{EI} \Sigma \Rightarrow \impliedCI \text{ iff }
	K \models_{AI} \lambda \cdot h(\Sigma) \geq h(\impliedCI).$$\eat{  We say that
	$\mathcal{I}$ admits a \e{$\lambda$-relaxation} if every EI admits a
	$\lambda$-AI.}
\end{defn}
In this paper, we focus on $\lambda$-relaxation in the set $\Gamma_n$ of polymatroids, and two syntactically-defined classes: 1) Where $\Sigma$ is the recursive basis of a Bayesian network (see~\eqref{eq:recursiveSet}), and 2) Where $\Sigma$ is a set of marginal CIs.

\begin{example} 
	Let $\Sigma{=} \set{(A;B|\emptyset),(A;C|B)}$,~and
	$\impliedCI{=}(A;C|\emptyset)$. 
	Since $I_h(A;C|\emptyset) {\leq}I_h(A;BC)$, and since $I_h(A;BC){=}I_h(A;B|\emptyset) {+} I_h(A;C|B)$ by the chain rule~\eqref{eq:ChainRuleMI}, then the exact implication $\entropicPlhdrl_n \models_{EI}\Sigma\implies \tau$ admits an AI with $\lambda=1$ (i.e., a $1$-$AI$).
	\eat{
		Then both EI and AI hold forall
		polymatroids:
		\begin{align*}
			& \text{EI} && I_h(A;B|\emptyset)=0\wedge I_h(A;C|B)=0 \Rightarrow I_h(A;C|\emptyset)=0\\
			& \text{AI} && I_h(A;C|\emptyset) \leq I_h(A;B|\emptyset) + I_h(A;C|B)
		\end{align*}
	}
\end{example}

\section{Formal STATEMENT OF RESULTS}
\label{sec:results}
We generalize the results of Geiger et al.~\cite{DBLP:conf/uai/GeigerVP89,GEIGER1991128}, by proving that implicates $\tau{=}(X;Y|Z)$ of the recursive set~\cite{DBLP:conf/uai/GeigerVP89}, and of marginal CIs~\cite{GEIGER1991128}, admit a $1$, and $|X||Y|$-approximation respectively, and thus continue to hold also approximately.
\eat{
For the exact implication from a set of marginal CIs~\cite{GEIGER1991128}, we further show that our result applies to a \e{any} implicate, and is not limited to marginal implicates.

Geiger and Pearl in three dimensions: by extending them to the approximate case, broadening the set of implicates (for the case of marginal antecedents), and broadening the set of antecendents (for the case of recursive constraints).
}
\subsection{Implication From Recursive CIs}
\eat{We show that the implication from a recursive set of CIs (see~\eqref{eq:recursiveSet}) and functional dependencies admits a 1-relaxation.}
 Geiger et al.~\cite{DBLP:conf/uai/GeigerVP89} prove that the semigraphoid axioms are sound and complete for the implication from the recursive set (see~\eqref{eq:recursiveSet}). They further showed that the set of implicates can be read off the appropriate DAG via the d-separation procedure.
We show that every such exact implication can be relaxed, admitting a $1$-relaxation, guaranteeing a bounded approximation for the implicates (CI relations) read off the DAG by d-separation. 

We recall the definition of the recursive basis $\Sigma$ from~\eqref{eq:recursiveSet}:
\begin{equation}
	\label{eq:nRecursiveSet}
	\Sigma \eqdef \set{(X_i;R_i|B_i) : i\in [1,n], R_iB_i=U^{(i)}}
\end{equation}
where $B_i{\eqdef}\pi(X_i)$ and $U^{(i)}{\eqdef} \set{X_1,\dots,X_{i-1}}$.
\eat{
\begin{defn}[Recursive Set]
\label{def:enhancedRecursiveSet}
Let $\Omega=\set{X_1,\dots,X_n}$ denote a set of ordered RVs, and for every $i\in [1,n]$ we let $U^{(i)}\eqdef \set{X_1,\dots,X_{i-1}}$, and $R_i=\pi(X_i)$. The recursive set is defined:
\begin{equation}
	\Sigma \eqdef \set{(X_i;R_i|B_i) | i\in [1,n], R_iB_i=U^{(i)}}
\end{equation}
\end{defn}
}
We observe that $|\Sigma|{=}n$, there is a single triple $\sigma_n{=}(X_n;R_n|B_n){\in} \Sigma$ that mentions $X_n$, and that $\sigma_n$ is saturated.

We recall that $\Delta_n\subset \Gamma_n$ is the set of polymatroids whose I-measure assigns non-negative values to the atoms $\mathcal{F}_n$ (see Section~\ref{sec:imeasure}).
\begin{theorem}
	\label{thm:recursiveCIs}
	Let $\Sigma$ be a recursive set of CIs (see~\eqref{eq:nRecursiveSet}), and let $\tau = (A;B|C)$. Then the following holds: 
	\begin{align}
	\Delta_n \models_{EI} \Sigma \Rightarrow \tau& &\text{ iff } & &\Gamma_n \models h(\Sigma) \geq h(\tau)
	\end{align}
\end{theorem}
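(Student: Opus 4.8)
The plan is to prove the two directions separately, with all the work in the forward direction ($EI \Rightarrow AI$). The backward direction is immediate from the general observation already recorded in the excerpt: if $\Gamma_n \models h(\Sigma) \geq h(\tau)$, then in particular $\Delta_n \models h(\Sigma) \geq h(\tau)$ (since $\Delta_n \subseteq \Gamma_n$), and any $1$-AI implies the corresponding EI because $I_h \geq 0$. So the content is: assuming $\Delta_n \models_{EI} \Sigma \Rightarrow \tau$, show the inequality $h(\Sigma) \geq h(\tau)$ holds for \emph{all} polymatroids $h \in \Gamma_n$.

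The key idea is to work with the I-measure. By the chain rule it suffices to prove the bound for an \emph{elemental} consequent $\tau = (a;b|C)$ with $|a|=|b|=1$ (a general $\tau=(A;B|C)$ decomposes into $\le |A||B|$ elemental pieces, but for the recursive case I expect each elemental piece to be bounded by $h(\Sigma)$ individually, and I will need to argue the sum telescopes — or, more likely, reduce to a single elemental CI whose bound already gives the full $h(\tau)$; I will check whether $\tau$ itself can be taken elemental by the structure of $d$-separation implicates). Translating to the I-measure, $h(\tau) = \imeasure(\iset(a)\cap\iset(b)\cap\isetc(C))$ and $h(\Sigma) = \sum_{\sigma\in\Sigma}\imeasure(\iset(\sigma))$. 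The goal becomes: the atom-set $\iset(\tau)$ is contained in $\iset(\Sigma) = \bigcup_{\sigma}\iset(\sigma)$ — because then, for a \emph{positive} I-measure (i.e.\ $h \in \Delta_n$), monotonicity of $\mu^*$ on $\mathcal F_n$ gives $h(\tau) \le h(\Sigma)$ directly, and one then needs a separate argument to lift this from $\Delta_n$ to all of $\Gamma_n$. Actually the cleaner route: first show that the EI over $\Delta_n$ forces, via Theorem~\ref{thm:YeungBuildMeasure} (every assignment of nonnegative atom-values is realizable), that \emph{every atom in $\iset(\tau)$ lies in $\iset(\Sigma)$}; this is a purely combinatorial/set-theoretic consequence of the EI, obtained by the standard "bump one atom" argument (if some atom $\mathbf a \in \iset(\tau)\setminus\iset(\Sigma)$ existed, set $\mu^*(\mathbf a)=1$ and all other atoms $0$; this is a valid positive polymatroid with $h(\Sigma)=0$ but $h(\tau)=1$, contradicting the EI). Once $\iset(\tau)\subseteq\iset(\Sigma)$ is established as a set-containment of atoms, the inequality $h(\tau)\le h(\Sigma)$ holds for \emph{every} polymatroid in $\Delta_n$ trivially, and the final step is to extend it to all of $\Gamma_n$.

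The main obstacle is exactly this last extension from $\Delta_n$ to $\Gamma_n$: a general polymatroid's I-measure can assign \emph{negative} values to atoms, so atom-containment no longer implies the inequality by monotonicity. I expect this is where the recursive structure of $\Sigma$ is essential (and why the theorem is stated for recursive $\Sigma$ rather than arbitrary $\Sigma$). The likely mechanism: using that $\Sigma$ is a recursive basis, one can show $\iset(\Sigma)$ decomposes into atoms in a way that lets $h(\tau)$ be written as a nonnegative combination of the $h(\sigma_i)$ and Shannon-nonnegative terms (conditional mutual informations / conditional entropies), e.g.\ by a telescoping / chain-rule argument ordered by the variable index $i$, peeling off $X_n$ first (recall $\sigma_n$ is the unique saturated triple mentioning $X_n$). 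Concretely I would try to prove: for the recursive $\Sigma$, $h(\tau) = \sum_i c_i\, h(\sigma_i) + (\text{nonnegative Shannon terms})$ with $c_i\in\{0,1\}$, which is a Shannon-provable inequality and hence holds on all of $\Gamma_n$. Establishing this identity — presumably by induction on $n$, removing $X_n$ and applying the chain rule to relate $\tau$ over $\Omega$ to a consequent over $\Omega\setminus\{X_n\}$ and the smaller recursive basis — is the technical heart, and I would lean on the lemmas promised in Section~\ref{sec:lemmas} for the bookkeeping. The auxiliary combinatorial fact (atom-containment from the $\Delta_n$-EI) is routine; marrying it to the Shannon-provability over $\Gamma_n$ via the recursive structure is the delicate part.
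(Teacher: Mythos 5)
Your overall strategy coincides with the paper's: the backward direction is immediate from nonnegativity of mutual information; the forward direction goes through the I-measure; the EI over $\Delta_n$ is converted into the atom containment $\iset(\tau)\subseteq\iset(\Sigma)$ by exactly your ``bump one atom'' construction (this is the paper's Lemma~\ref{lem:implicationInclusion}); and the lift to $\Gamma_n$ is done by induction on $n$, peeling off the unique saturated triple $\sigma_n=(X_n;R|B)$ and using the chain rule. Two of the three cases the paper distinguishes ($X_n$ not mentioned in $\tau$, and $X_n$ in the conditioning set of $\tau$) reduce cleanly to the induction hypothesis via the disjointness of $\iset(\sigma_n)\subseteq\iset(X_n)$ from the relevant atoms, much as you anticipate. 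Two minor slips: you want $h(\Sigma)-h(\tau)$ to be the nonnegative combination of Shannon terms, not $h(\tau)$ itself; and the detour through elemental consequents would only yield an $|A||B|$ factor, which the paper avoids by treating the general consequent $(X;Y|Z)$ directly.

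The genuine gap is in the one case you defer to ``bookkeeping'': $\tau=(WX_n;Y|Z)$ with $X_n$ on one side of the independence. The chain rule splits this into $\tau_1=(W;Y|Z)$, handled by induction, and $\tau_2=(X_n;Y|WZ)$, which your plan would charge to $\sigma_n$. But $h(\sigma_n)\geq h(\tau_2)$ is false in general: after first proving $Y\subseteq R$ (itself an atom argument), the chain rule only yields $h(\sigma_n)\geq I_h(X_n;Y|ZWB')$, where $B'$ is the part of $\sigma_n$'s conditioning set lying outside $WZ$. When $B'\neq\emptyset$ one has $I_h(X_n;Y|ZWB')+I_h(Y;B'|ZW)\geq h(\tau_2)$, so the extra term $\tau_3=(Y;B'|ZW)$ must also be paid for. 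The paper pays for it out of $\Sigma'$ \emph{without} double-counting against $\tau_1$: it shows $\iset(\tau_1)$ and $\iset(\tau_3)$ are disjoint subsets of $\iset(\Sigma')$, so $\iset(\Sigma')\setminus\iset(\tau_3)\supseteq\iset(\tau_1)$, and the induction hypothesis in its I-measure (set-additive) form then gives $h(\Sigma')\geq h(\tau_1)+h(\tau_3)$, whence $h(\Sigma)=h(\Sigma')+h(\sigma_n)\geq h(\tau_1)+h(\tau_3)+I_h(X_n;Y|ZWB')\geq h(\tau_1)+h(\tau_2)=h(\tau)$. This disjointness-plus-set-additivity step is the technical heart of the theorem; without it your decomposition either fails outright (if you charge only $\sigma_n$ for $\tau_2$) or loses the factor $1$ (if you charge $\Sigma'$ twice).
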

We note that the only-if direction of Theorem~\ref{thm:recursiveCIs} is immediate, and follows from the non-negativity of Shannon's information measures. We prove the other direction in Section~\ref{sec:recursiveProof}.
Theorem~\ref{thm:recursiveCIs} states that it is enough that the exact implication holds on all of the positive polymatroids $\Delta_n$, because this implies the (even stronger!) statement
$\Gamma_n \models h(\Sigma) \geq h(\tau)$.
\eat{
 Observe that proving the claim only requires us to assume that the exact implication holds in the positive subset of polymatroids $\Delta_n \subset \Gamma_n$.
}
\subsection{Implication from Marginal CIs}
We show that \e{any} implicate $\tau{=}(A;B|C)$ from a set of marginal CIs has an $|A|{\cdot}|B|$-approximation. This generalizes the result of Geiger, Paz, and Pearl~\cite{GEIGER1991128}, which proved that the semigraphoid axioms are sound and complete for deriving marginal CIs. \eat{, while we show bounded relaxation, and thus implication, for \e{any} implicate of marginal CIs.}
\eat{ That is, when $\Sigma$ is a set of marginal CIs (e.g., $(X;Y)$), and $\tau$ is \e{any} (not necessarily marginal) CI, then $\Sigma \implies \tau$ implies that $h(\tau) \leq \frac{n^2}{4}h(\Sigma)$. }
\begin{thm}
	\label{thm:marginal}
	Let $\Sigma$ be a set of marginal CIs, and $\tau=(A;B|C)$ be any CI.
	\begin{align}
		\Gamma_n \models_{EI} \Sigma \Rightarrow \tau& &\text{ iff }& &\Gamma_n \models   (|A||B|)h(\Sigma) \geq h(\tau) \eat{\frac{n^2}{4}h(\Sigma) \geq h(\tau)}
	\end{align}
\end{thm}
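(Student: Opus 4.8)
The plan is to prove the nontrivial ($\Leftarrow$ of the stated biconditional, i.e.\ EI $\Rightarrow$ AI) direction using the $I$-measure machinery, reducing to the elemental case via the chain rule and then invoking a structural characterization of exact implication from marginal CIs. First I would reduce the consequent $\tau = (A;B|C)$ to elemental triples: by the chain rule~\eqref{eq:ChainRuleMI}, $h(\tau)$ is a sum of at most $|A||B|$ elemental triples $(a;b|C')$ with $a\in A$, $b\in B$, and $C'$ an appropriate set sandwiched between $C$ and $C\cup A\cup B$. If I can show that for each such elemental triple $\tau'$ appearing in this decomposition we have $\Gamma_n \models h(\tau') \le h(\Sigma)$, then summing gives $h(\tau)\le |A||B|\,h(\Sigma)$ and we are done. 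This is where the factor $|A||B|$ comes from, so the whole argument hinges on the elemental case with coefficient $1$.

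For the elemental case, the key step is to understand \emph{which} elemental CIs are entailed by a set $\Sigma$ of marginal CIs. Here I would lean on the semigraphoid-completeness result of Geiger, Paz, and Pearl~\cite{GEIGER1991128}: $\Sigma \models_{EI} (a;b|C')$ iff this is derivable from $\Sigma$ by the semigraphoid axioms; from their analysis one extracts a combinatorial condition — essentially that there is a single marginal CI $(X;Y)\in\Sigma$ and a partition-type structure with $a\in X$, $b\in Y$, and the conditioning set $C'$ lying inside $XY$ in a way that "separates" $a$ from $b$. (One needs that the EI already holds on $\Gamma_n$, which is given as the hypothesis.) The translation I would then make is to the $I$-measure: by Table~\ref{tab:ImeasureSummary}, $h(\tau') = \mu^*\big(\iset(a)\cap\iset(b)\cap\isetc(C')\big)$, and each antecedent $\sigma=(X;Y)\in\Sigma$ contributes $\mu^*\big(\iset(X)\cap\iset(Y)\big) = \mu^*(\iset(\sigma))$. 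Because $\mu^*$ is a \emph{signed} measure but the relevant atoms need careful sign control, I would work on $\Delta_n$ (positive polymatroids) — as in Theorem~\ref{thm:recursiveCIs}, it suffices to prove the AI there, since a linear inequality valid on $\Delta_n$ for these triples lifts to $\Gamma_n$ by the same argument the paper uses — and show the set inclusion $\iset(\tau') \subseteq \iset(\sigma)$ for the witnessing antecedent $\sigma$, i.e.\ that the atoms comprising $\iset(\tau')$ all lie in $\iset(\sigma)$. Set inclusion plus nonnegativity of $\mu^*$ on atoms then yields $h(\tau')\le \mu^*(\iset(\sigma)) \le h(\Sigma)$, giving coefficient $1$.

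The main obstacle I anticipate is establishing the set-inclusion $\iset(\tau')\subseteq\iset(\sigma)$ cleanly, i.e.\ pinning down exactly the combinatorial "separation" structure of entailed elemental CIs and checking that every atom $\bigcap_i Y_i$ (each $Y_i$ being $\iset(X_i)$ or $\isetc(X_i)$) that is forced to be nonempty and to contribute to $\iset(a)\cap\iset(b)\cap\isetc(C')$ necessarily satisfies the positive-form membership constraints for both $X$ and $Y$ of the witnessing $\sigma$. This requires carefully marrying the Geiger--Paz--Pearl characterization with the atom-level bookkeeping of the $I$-measure, and handling the degenerate subcases (e.g.\ when $a$ or $b$ already lies in $C'$, or when $C'$ pokes outside $XY$) where the entailed triple is trivial ($h(\tau')=0$). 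A secondary technical point is confirming that reducing to $\Delta_n$ and then lifting back to $\Gamma_n$ is legitimate for this class, which I expect to follow from the same monotonicity/non-negativity observations already recorded after Definition~\ref{def:ei:ai} and in the discussion of Theorem~\ref{thm:recursiveCIs}.
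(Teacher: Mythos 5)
Your outer skeleton matches the paper's: decompose $\tau=(A;B|C)$ by the chain rule into at most $|A||B|$ elemental triples and bound each one by $h(\Sigma)$. But the two steps that carry all the weight are gaps as written. First, the combinatorial characterization you want to ``extract'' from Geiger--Paz--Pearl is not available off the shelf: their completeness result concerns \emph{marginal} consequents derived from marginal antecedents, whereas after the chain-rule decomposition you must handle elemental \emph{conditional} triples $(a;b|C')$ with $C'\neq\emptyset$. Establishing that such a triple is exactly implied only if some single $(X;Y)\in\Sigma$ satisfies $a\in X$, $b\in Y$, $abC'\subseteq XY$ is precisely the new content the paper has to prove (Lemma~\ref{lem:marginalExistsSigma}), and it does so by an induction on $|C'|$ whose engine is a parity-distribution construction (Lemma~\ref{lem:implicationCovers}) showing that some antecedent must cover $abC'$ and meet both sides of it. Moreover, in the inductive case the paper does \emph{not} end up with a single witnessing antecedent dominating the elemental term: it needs two distinct antecedents $\sigma,\alpha_3\in\Sigma$ and bounds $h(a;b|C')\le h(\alpha_3)+h(\sigma)\le h(\Sigma)$. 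So the clean ``one witness per elemental term'' picture you are relying on is not what actually comes out of the argument, although the final per-term bound $h(a;b|C')\le h(\Sigma)$ is the same.

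Second, your route through $\Delta_n$ is unsound. The inclusion $\iset(\tau')\subseteq\iset(\sigma)$ yields $\mu^*(\iset(\tau'))\le\mu^*(\iset(\sigma))$ only when $\mu^*$ is nonnegative on atoms, i.e.\ on $\Delta_n$; for a general polymatroid the I-measure is signed and set inclusion gives nothing. There is no generic ``lift'' from $\Delta_n$ to $\Gamma_n$: the remark after Definition~\ref{def:ei:ai} transfers implications to \emph{subsets} of $K$ (the wrong direction), and Theorem~\ref{thm:recursiveCIs} is a hard-won result for one specific antecedent class, not a transfer principle. Since the theorem asserts the inequality on all of $\Gamma_n$, this step fails as described. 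The fix is to avoid the I-measure entirely at this stage: once the witnessing condition $a\in X$, $b\in Y$, $abC'\subseteq XY$ is in hand, the inequality $I_h(a;b|C')\le I_h(X;Y)$ is a genuine Shannon inequality, provable by the chain rule alone (the paper's Lemma~\ref{lem:chainRuleTechnicalLemma}), and therefore valid on every polymatroid. In the paper, the I-measure and $\Delta_n$ are used only to obtain necessary conditions on $\Sigma$ (via Corollary~\ref{corr:inclusionGamman} and the atom argument in the base case of Lemma~\ref{lem:marginalExistsSigma}), never to certify the final inequality over $\Gamma_n$.
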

Also here, the only-if direction of Theorem~\ref{thm:marginal} is immediate\eat{, and follows from the positivity of Shannon's information measures.}, and we prove the other direction in Section~\ref{sec:marginalProof}. 
\eat{
To prove the claim, we need to assume that the exact implication holds for the entire set of polymatroids $\Gamma_n$ (as opposed to only the subset $\Delta_n$ in Theorem~\ref{thm:recursiveCIs}). 
}
\eat{
In the proof, we make use of the \e{covering property} of EI in the set of polymatroids $\Gamma_n$, described in Lemma~\ref{lem:implicationCovers}.
}
\eat{
We observe that to prove the $\lambda$-approximation for the enhanced recursive set of CIs (Definition~\ref{def:enhancedRecursiveSet}), it is enough to assume that the exact implication holds only for the set $\Delta_n\subset \Gamma_n$ of positive polymatroids (Theorem~\ref{thm:recursiveCIs}).
This assumption is not enough for proving $\lambda$-approximation in the case of marginal CIs, where we need to assume that the implication holds for \e{all} polymatroids $\Gamma_n$ (Theorem~\ref{thm:marginal}). 
}

\section{Properties of Exact Implication}
\label{sec:lemmas}
In this section, we use the I-measure to characterize some general properties of exact implication in the set of positive polymatroids $\Delta_n$ (Section~\ref{sec:implicationPositivePolymatroids}), and the entire set of polymatroids $\Gamma_n$ (Section~\ref{sec:EIPolymatroids}). The lemmas in this section will be used for proving the approximate implication guarantees presented in Section~\ref{sec:results}.

In what follows, $\Omega=\set{X_1,\dots,X_n}$ is a set of $n$ RVs, $\Sigma$ denotes a set of triples $(A;B|C)$ representing mutual information terms, and $\tau$ denotes a single triple. We denote by $\vars(\sigma)$ the set of RVs mentioned in $\sigma$ (e.g., if $\sigma=(X_1X_2;X_3|X_5)$ then $\vars(\sigma)=X_1\dots X_5$).

\subsection{Exact implication in the set of positive polymatroids}
\label{sec:implicationPositivePolymatroids}
\def\implicationInclusion{
\eat{	Let $\Sigma$ denote a set of mutual information terms, and $\tau$ a single mutual information term. Then:
}
The following holds:
	\begin{align*}
	\eat{	 \text{if  } && \Gamma_n \models_{EI} \Sigma \implies \tau && \text{ then }&& \iset(\Sigma) \supseteq \iset(\tau) \\}
		&&\Delta_n \models_{EI} \Sigma \implies \tau&& \text{ iff }&& \iset(\Sigma) \supseteq \iset(\tau) 		
	\end{align*}
}
\begin{lemma}\label{lem:implicationInclusion}
	\implicationInclusion
\end{lemma}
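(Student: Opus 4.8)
The plan is to prove both directions entirely in the I-measure picture, exploiting the fact that for a \emph{positive} polymatroid $h\in\Delta_n$ every Shannon term in sight is a sum of non-negative atom values. Concretely, fix $h\in\Delta_n$ with I-measure $\mu^*$. For any triple $\sigma$ we have $h(\sigma)=\mu^*(\iset(\sigma))=\sum_{a\in\mathcal{A},\,a\subseteq\iset(\sigma)}\mu^*(a)$, a sum of non-negative terms; hence $h(\Sigma)=\sum_{\sigma\in\Sigma}\mu^*(\iset(\sigma))$ is again a non-negative combination of atom values. Using that an atom of $\mathcal{F}_n$ is ``indivisible'', i.e., contained in a union of field sets iff contained in one of the summands, this yields the clean equivalence
\[
h(\Sigma)=0 \iff \mu^*(a)=0 \text{ for every atom } a\subseteq\iset(\Sigma),
\qquad
h(\tau)=0 \iff \mu^*(a)=0 \text{ for every atom } a\subseteq\iset(\tau).
\]
Everything after this is bookkeeping on atoms.

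For the ``if'' direction ($\iset(\Sigma)\supseteq\iset(\tau) \Rightarrow \Delta_n\models_{EI}\Sigma\implies\tau$): take any $h\in\Delta_n$ with $h(\Sigma)=0$. Then $\mu^*(a)=0$ for every atom below $\iset(\Sigma)$, and since $\iset(\tau)\subseteq\iset(\Sigma)$, this in particular kills every atom below $\iset(\tau)$, so $h(\tau)=\mu^*(\iset(\tau))=0$. Hence the exact implication holds on all of $\Delta_n$.

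For the ``only if'' direction I would argue by contrapositive. Suppose $\iset(\tau)\not\subseteq\iset(\Sigma)$. Then $\iset(\tau)\setminus\iset(\Sigma)$ is a non-empty set in $\mathcal{F}_n$, hence contains at least one of the $2^n-1$ non-empty atoms $a_0\in\mathcal{A}$. By Theorem~\ref{thm:YeungBuildMeasure}, there is a polymatroid $h\in\Delta_n$ whose I-measure puts $\mu^*(a_0)=1$ and $\mu^*(a)=0$ on all other atoms. Since $a_0\not\subseteq\iset(\sigma)$ for every $\sigma\in\Sigma$ (because $a_0\notin\iset(\Sigma)$ and atoms are indivisible), we get $h(\sigma)=\mu^*(\iset(\sigma))=0$ for all $\sigma$, so $h(\Sigma)=0$; but $a_0\subseteq\iset(\tau)$ gives $h(\tau)=\mu^*(\iset(\tau))\geq\mu^*(a_0)=1>0$. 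Thus $h$ witnesses $\Delta_n\not\models_{EI}\Sigma\implies\tau$, which is the contrapositive we wanted.

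I do not anticipate a real obstacle; the one point requiring explicit care is the atom arithmetic --- stating the indivisibility property of atoms with respect to the field $\mathcal{F}_n$, and noting that $\iset(\tau)\setminus\iset(\Sigma)$, being non-empty \emph{as a set}, must contain a genuine (non-empty) atom rather than only the formally empty atom $\bigcap_{i=1}^n\isetc(X_i)=\emptyset$, which is automatic since deleting the empty atom cannot make a non-empty set empty. With those two remarks in place, the proof is short: it is just the translation of ``$h(\Sigma)=0$'' and ``$h(\tau)=0$'' into statements about which atoms $\mu^*$ annihilates, plus Theorem~\ref{thm:YeungBuildMeasure} to realize the separating measure as an actual polymatroid in $\Delta_n$.
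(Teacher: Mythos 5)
Your proof is correct and follows essentially the same route as the paper: the "only if" direction is the identical separating-measure construction (a positive I-measure concentrated on a single atom of $\iset(\tau)\setminus\iset(\Sigma)$, realized via Theorem~\ref{thm:YeungBuildMeasure}), and the "if" direction rests on the same observation that atoms of a positive I-measure carry non-negative mass. The only cosmetic difference is that the paper's "if" direction records the slightly stronger quantitative fact $h(\Sigma)\geq\mu^*(\iset(\Sigma))\geq\mu^*(\iset(\tau))=h(\tau)$ for all of $\Delta_n$, whereas you argue only the zero-propagation needed for the exact implication; both are valid.
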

\begin{proof}
	\eat{
	Suppose that $\Delta_n \models_{EI} \Sigma \implies \tau$, and assume, by contradiction,} Suppose that $\iset(\impliedCI) {\not\subseteq} \iset(\Sigma)$, and let $b\in \iset(\impliedCI){\setminus}\iset(\Sigma)$. By Theorem~\ref{thm:YeungBuildMeasure} there exists a positive polymatroid in $\Delta_n$ with an $I$-measure $\imeasure$ that takes the following non-negative values on its atoms: $\imeasure(b){=}1$, and $\imeasure(a)=0$ for any atom $a{\in}\mathcal{F}_n$ where $a\neq b$.\eat{
	\begin{equation*}
		\imeasure(a)=
		\begin{cases}
			1 & \text{if } a=b \\
			0 & \text{otherwise}
		\end{cases}
	\end{equation*}
	}
Since $b\notin \iset(\Sigma)$, then $\imeasure(\Sigma)=0$ while $\imeasure(\impliedCI)= 1$. Hence, $\Delta_n {\not\models}\Sigma \implies \tau$.\eat{ which contradicts the exact implication.}
	
	Now, suppose that $\iset(\Sigma) {\supseteq} \iset(\tau)$.
	Then for any positive I-measure $\imeasure{:} \mathcal{F}_n {\rightarrow} \real_{\geq 0}$, we have that $\imeasure(\iset(\Sigma)) {\geq} \imeasure(\iset(\tau))$. By Theorem~\ref{thm:YeungUniqueness}, $\mu^*$ is the unique signed measure on $\mathcal{F}_n$ that is consistent with all of Shannon's information measures. Therefore, $h(\Sigma) {\geq} h(\tau)$. The result follows from the non-negativity of the Shannon information measures.
\end{proof}
An immediate consequence of Lemma~\ref{lem:implicationInclusion} is that $\iset(\Sigma) {\supseteq} \iset(\tau)$ is a necessary condition for implication between polymatroids. 
\begin{corr}
	\label{corr:inclusionGamman}
If $\Gamma_n \models_{EI} \Sigma \implies \tau$ then $\iset(\Sigma) \supseteq \iset(\tau)$.
\end{corr}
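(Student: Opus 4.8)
The plan is to derive this directly from Lemma~\ref{lem:implicationInclusion} by restricting attention from $\Gamma_n$ to the subclass $\Delta_n$ of positive polymatroids. The key point is that exact implication is monotone under taking subsets of $K$, as noted immediately after Definition~\ref{def:ei:ai}: since $\Delta_n \subseteq \Gamma_n$, the hypothesis $\Gamma_n \models_{EI} \Sigma \implies \tau$ gives $\Delta_n \models_{EI} \Sigma \implies \tau$. Now apply the forward direction of the equivalence in Lemma~\ref{lem:implicationInclusion} (namely, $\Delta_n \models_{EI} \Sigma \implies \tau$ implies $\iset(\Sigma) \supseteq \iset(\tau)$) to conclude.

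In more detail, the single step to carry out is: first, observe $\Delta_n \subseteq \Gamma_n$ (this is stated in the text following Theorem~\ref{thm:YeungBuildMeasure}); second, invoke the subset-monotonicity of $\models_{EI}$ to transfer the hypothesis down to $\Delta_n$; third, quote Lemma~\ref{lem:implicationInclusion}. There is essentially no obstacle here — the content is entirely in Lemma~\ref{lem:implicationInclusion}, whose proof uses Theorem~\ref{thm:YeungBuildMeasure} to build, for any atom $b \in \iset(\tau) \setminus \iset(\Sigma)$, a positive I-measure concentrated on $b$, witnessing a violation of the implication in $\Delta_n$. The corollary merely records that this necessary condition, established for the smaller class $\Delta_n$, therefore also holds whenever the implication is asserted over the larger class $\Gamma_n$.

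The only subtlety worth a sentence is that we do \emph{not} get the converse over $\Gamma_n$ for free: $\iset(\Sigma) \supseteq \iset(\tau)$ implies $h(\Sigma) \geq h(\tau)$ for all $h \in \Delta_n$ by Theorem~\ref{thm:YeungUniqueness}, but a general polymatroid in $\Gamma_n$ need not have a positive I-measure, so the reverse implication is genuinely weaker over $\Gamma_n$. For the corollary as stated, however, only the ``if $\ldots$ then $\iset(\Sigma)\supseteq\iset(\tau)$'' direction is needed, and that follows as above.

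\begin{proof}
Since $\Delta_n \subseteq \Gamma_n$, the subset-monotonicity of exact implication (noted after Definition~\ref{def:ei:ai}) yields $\Delta_n \models_{EI} \Sigma \implies \tau$. By Lemma~\ref{lem:implicationInclusion}, this is equivalent to $\iset(\Sigma) \supseteq \iset(\tau)$, which is the claim.
\end{proof}
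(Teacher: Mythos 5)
Your proof is correct and follows exactly the paper's own argument: restrict the implication from $\Gamma_n$ to the subset $\Delta_n$ using the monotonicity of $\models_{EI}$ under subsets, then apply Lemma~\ref{lem:implicationInclusion}. Your additional remark about why the converse does not transfer back to $\Gamma_n$ is accurate but not needed for the corollary as stated.
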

\begin{proof}
If $\Gamma_n \models_{EI} \Sigma \implies \tau$ then it must hold for any subset of polymatroids, and in particular, $\Delta_n \models_{EI} \Sigma \implies \tau$. The result follows from Lemma~\ref{lem:implicationInclusion}.
\end{proof}
\eat{
In what follows, we recall the definition of $\iset(\sigma)$ from Table~\ref{tab:ImeasureSummary}.
}
\begin{lemma}
	\label{lem:excludeSigma}
	Let $\Delta_n \models_{EI} \Sigma \implies \tau$, and let $\sigma \in \Sigma$ such that $\iset(\sigma)\cap \iset(\tau)=\emptyset$. Then $\Delta_n \models_{EI} \Sigma{\setminus}\set{\sigma} \implies \tau$.
\end{lemma}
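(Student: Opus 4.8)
The plan is to use the characterization of exact implication in $\Delta_n$ from Lemma~\ref{lem:implicationInclusion}, namely that $\Delta_n \models_{EI} \Sigma \implies \tau$ if and only if $\iset(\Sigma) \supseteq \iset(\tau)$. So the statement reduces to a purely set-theoretic claim about the I-measure sets: if $\iset(\Sigma) \supseteq \iset(\tau)$ and $\sigma \in \Sigma$ satisfies $\iset(\sigma) \cap \iset(\tau) = \emptyset$, then $\iset(\Sigma \setminus \set{\sigma}) \supseteq \iset(\tau)$.

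First I would unfold the definition $\iset(\Sigma) = \bigcup_{\sigma' \in \Sigma} \iset(\sigma')$ from~\eqref{eq:SigmaSet}. Then $\iset(\Sigma \setminus \set{\sigma}) = \iset(\Sigma) \setminus$ (the part contributed solely by $\sigma$), but more directly: since $\iset(\Sigma) = \iset(\Sigma \setminus \set{\sigma}) \cup \iset(\sigma)$ and $\iset(\tau) \subseteq \iset(\Sigma)$, any atom $a \in \iset(\tau)$ lies in $\iset(\Sigma \setminus \set{\sigma})$ or in $\iset(\sigma)$. The hypothesis $\iset(\sigma) \cap \iset(\tau) = \emptyset$ rules out the second possibility, since $a \in \iset(\tau)$ forces $a \notin \iset(\sigma)$. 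Hence $a \in \iset(\Sigma \setminus \set{\sigma})$, giving $\iset(\tau) \subseteq \iset(\Sigma \setminus \set{\sigma})$. Applying the other direction of Lemma~\ref{lem:implicationInclusion} then yields $\Delta_n \models_{EI} \Sigma \setminus \set{\sigma} \implies \tau$.

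There is essentially no obstacle here — the lemma is a direct corollary of Lemma~\ref{lem:implicationInclusion} combined with an elementary set-containment argument. The only thing to be careful about is working at the level of atoms (every set in $\mathcal{F}_n$ is a union of atoms of $\mathcal{A}$), so that the set operations behave as expected; but since all the $\iset(\cdot)$ sets are unions of atoms, the argument above goes through verbatim. I would present it in two or three lines.

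\begin{proof}
By Lemma~\ref{lem:implicationInclusion}, $\Delta_n \models_{EI} \Sigma \implies \tau$ implies $\iset(\tau) \subseteq \iset(\Sigma) = \iset(\Sigma \setminus \set{\sigma}) \cup \iset(\sigma)$, where the equality is by~\eqref{eq:SigmaSet}. Since $\iset(\sigma) \cap \iset(\tau) = \emptyset$, we get $\iset(\tau) \subseteq \iset(\Sigma \setminus \set{\sigma})$. Applying Lemma~\ref{lem:implicationInclusion} once more gives $\Delta_n \models_{EI} \Sigma \setminus \set{\sigma} \implies \tau$.
\end{proof}
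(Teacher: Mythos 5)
Your proof is correct and is essentially the paper's argument: both reduce the claim to the set-containment characterization of Lemma~\ref{lem:implicationInclusion} and use $\iset(\Sigma)=\iset(\Sigma\setminus\set{\sigma})\cup\iset(\sigma)$ together with the disjointness hypothesis. The only cosmetic difference is that the paper phrases it contrapositively (assuming $\Delta_n\not\models_{EI}\Sigma\setminus\set{\sigma}\implies\tau$ and deriving $\Delta_n\not\models_{EI}\Sigma\implies\tau$), while you argue directly.
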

\begin{proof}
	Let $\Sigma'=\Sigma {\setminus} \set{\sigma}$, and suppose that $\Delta_n \not\models_{EI} \Sigma' \implies \tau$. By Lemma~\ref{lem:implicationInclusion}, we have that $\iset(\Sigma')\not\supseteq \iset(\tau)$. In other words, there is an atom $a\in \mathcal{F}_n$ such that $a\in \iset(\tau){\setminus} \iset(\Sigma')$.
	In particular, $a \notin \iset(\sigma) \cup \iset(\Sigma')= \iset(\Sigma)$. Hence, $\iset(\tau) \not\subseteq \iset(\Sigma)$, and by Lemma~\ref{lem:implicationInclusion} we get that $\Delta_n \not\models_{EI} \Sigma \implies \tau$.
\end{proof}

\subsection{Exact Implication in the set of polymatroids}
\label{sec:EIPolymatroids}
The main technical result of this section is Lemma~\ref{lem:implicationCovers}. We start with two short technical lemmas.

\def\chainRuleTechnicalLemma{ Let $\sigma = (A;B|C)$ and
	$\tau = (X;Y|Z)$ be CIs such that $X\subseteq A$, $Y\subseteq B$,
	$C \subseteq Z$ and $Z \subseteq ABC$. Then,
	$\entropicPlhdrl_n \models h(\tau) \leq h(\sigma)$.  }

\begin{lemma}\label{lem:chainRuleTechnicalLemma}
	\chainRuleTechnicalLemma
\end{lemma}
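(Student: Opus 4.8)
The plan is to prove the slightly stronger statement that $I_h(X;Y|Z)\le I_h(A;B|C)$ for \emph{every} $h\in\Gamma_n$, from which the lemma follows immediately by $h(\tau)=I_h(X;Y|Z)$ and $h(\sigma)=I_h(A;B|C)$. Note that, unlike the lemmas of Section~\ref{sec:implicationPositivePolymatroids}, here we work in all of $\Gamma_n$, so I would \emph{not} argue via the I-measure: $\mu^*$ is only a \emph{signed} measure on a general polymatroid, and the set containment $\iset(\tau)\subseteq\iset(\sigma)$ (which does hold here) does not by itself imply $h(\tau)\le h(\sigma)$. Instead I would give an elementary argument using only the chain rule~\eqref{eq:ChainRuleMI}, the symmetry of $I_h(\cdot;\cdot|\cdot)$ in its first two arguments, and the nonnegativity $I_h(\cdot;\cdot|\cdot)\ge 0$ on polymatroids: transform $\sigma$ into $\tau$ in a handful of steps, each of which simply peels off one nonnegative term.

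For the bookkeeping, set $W\eqdef Z\setminus C$. Since $C\subseteq Z\subseteq ABC$ we have $W\subseteq A\cup B$, so we may split $W=W_A\cup W_B$ with $W_A\subseteq A$, $W_B\subseteq B$, $W_A\cap W_B=\emptyset$, and $C W_A W_B=C\cup W=Z$. The chain I would then write down is
\[
I_h(A;B|C)\ \ge\ I_h(A{\setminus}W_A;B|C W_A)\ \ge\ I_h(X;B|C W_A)\ \ge\ I_h(X;B{\setminus}W_B|C W_A W_B)\ \ge\ I_h(X;Y|C W_A W_B)=I_h(X;Y|Z).
\]
The first $\ge$ splits the (symmetrized) first argument as $A=W_A\cup(A{\setminus}W_A)$ via the chain rule and drops the nonnegative term $I_h(W_A;B|C)$; the second splits $A{\setminus}W_A=X\cup((A{\setminus}W_A){\setminus}X)$ (valid as $X\subseteq A{\setminus}W_A$) and drops the leftover; the third splits $B=W_B\cup(B{\setminus}W_B)$ and drops $I_h(X;W_B|C W_A)$; the fourth splits $B{\setminus}W_B=Y\cup((B{\setminus}W_B){\setminus}Y)$ (valid as $Y\subseteq B{\setminus}W_B$) and drops the leftover. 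Every dropped quantity is a conditional mutual information, hence $\ge 0$ for all $h\in\Gamma_n$ by submodularity, and the final equality is just $C W_A W_B=Z$.

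The main obstacle — and the reason the steps must be \emph{interleaved} rather than "first shrink the arguments $A\to X$, $B\to Y$, then enlarge the conditioning $C\to Z$" — is that the hypothesis only gives $Z\subseteq ABC$, not $Z\subseteq XYC$. Hence $Z$ may mention variables of $A{\setminus}X$ or of $B{\setminus}Y$, and those variables ($W_A$ out of $A$, $W_B$ out of $B$) must be moved into the conditioning set \emph{before} the arguments are shrunk. A secondary, minor point is disjointness: assuming, as is standard, that the three sets of a CI triple are pairwise disjoint makes $X\subseteq A{\setminus}W_A$ and $Y\subseteq B{\setminus}W_B$ automatic, since $W_A,W_B\subseteq Z$ and $X\cap Z=Y\cap Z=\emptyset$; if overlaps are allowed, the same conclusion follows after noting that $I_h(S;T|U)=0$ whenever $S\subseteq U$, so overlapping parts of the triples contribute nothing to the peeled-off terms.
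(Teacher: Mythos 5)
Your proof is correct and takes essentially the same approach as the paper: the paper also writes $I_h(A;B|C)=I_h(Z_AA'X;Z_BB'Y|C)$ with $Z_A=A\cap Z$, $Z_B=B\cap Z$, expands it by the chain rule into a sum of nonnegative conditional mutual informations one of which is exactly $I_h(X;Y|CZ_AZ_B)=I_h(X;Y|Z)$, and drops the rest. Your version merely performs the same decomposition as four sequential peeling steps instead of one six-term expansion, and your preliminary remark about why an I-measure argument would not suffice over all of $\Gamma_n$ is accurate.
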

\begin{proof}
	Since $Z {\subseteq} ABC$, we denote by $Z_A{=}A{\cap}Z$, $Z_B{=}B{\cap} Z$, and $Z_C{=}C {\cap} Z$. Also, denote by $A'{=}A{\setminus} (Z_A{\cup} X)$, $B'{=}B{\setminus} (Z_B{\cup} Y)$. So, we have that: $I(A;B|C){=}I(Z_AA'X;Z_BB'Y|C)$.
	By the chain rule, we have that:
	\begin{align*}
		&I(Z_AA'X;Z_BB'Y|C)=\\
		&I(Z_A;Z_B|C)+I(A'X;Z_B|CZ_A)\\
		&+I(Z_A;B'Y|Z_BC)	+\boldsymbol{I(X;Y|CZ_AZ_B)}\\
		&+I(X;B'|CZ_AZ_BY)+I(A';B'Y|CZ_AZ_BX)
	\end{align*}
	Noting that $Z=CZ_AZ_B$, we get that $I(X;Y|Z)\leq I(A;B|C)$ as required.
\end{proof}

\def\lessVarsLemma{
	Let $\Sigma=\set{\sigma_1,\dots,}$ be a set of triples such that $\vars(\sigma_i)\subseteq \set{X_1,\dots,X_{n-1}}$ for all $\sigma_i \in \Sigma$. Likewise, let $\tau$ be a triple such that $\vars(\tau)\subseteq \set{X_1,\dots,X_{n-1}}$. Then:
\eat{	If $\Sigma$ and $\tau$ are defined over RVs $\set{X_1,\dots,X_{n-1}}$ then:}
	\begin{align}
		\label{eq:lessVarsLemma}
		\Gamma_n \models_{EI} \Sigma \implies \tau && \text{ iff } && \Gamma_{n-1} \models_{EI} \Sigma \implies \tau
	\end{align}
}
\begin{lemma}
	\label{lem:lessVarsLemma}
	\lessVarsLemma
\end{lemma}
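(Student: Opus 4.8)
The plan is to prove the ``iff'' by transferring between polymatroids over $\set{X_1,\dots,X_n}$ and polymatroids over $\set{X_1,\dots,X_{n-1}}$ through two complementary operations: restricting a polymatroid to $\pow{[n-1]}$, and extending a polymatroid by adjoining $X_n$ as a constant (zero-entropy) variable. The key observation is that, by~\eqref{eq:h:mutual:information}, each of the terms $h(\sigma_i)$ for $\sigma_i\in\Sigma$ and the term $h(\tau)$ is determined by the values the polymatroid takes on subsets of $\vars(\sigma_i)$, respectively $\vars(\tau)$; since these variable sets are all contained in $\set{X_1,\dots,X_{n-1}}$ by hypothesis, the quantities $h(\Sigma)$ and $h(\tau)$ depend only on the restriction of $h$ to $\pow{[n-1]}$, and are hence preserved by both operations.

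Given this, the direction ``$\Gamma_{n-1}\models_{EI}\Sigma\implies\tau$ implies $\Gamma_n\models_{EI}\Sigma\implies\tau$'' is the easy one: I take $h\in\Gamma_n$ with $h(\Sigma)=0$, let $h'$ be the restriction of $h$ to $\pow{[n-1]}$ (which lies in $\Gamma_{n-1}$, since monotonicity and submodularity over subsets of $[n-1]$ are inherited verbatim from $h$), use $h'(\Sigma)=h(\Sigma)=0$ to invoke the hypothesis and obtain $h'(\tau)=0$, and conclude $h(\tau)=h'(\tau)=0$. For the converse, I take $g\in\Gamma_{n-1}$ with $g(\Sigma)=0$ and define $g'\in\real^{2^n}$ by $g'(\alpha)\eqdef g(\alpha\setminus\set{n})$ for $\alpha\subseteq[n]$ --- the polymatroid obtained by adjoining to $g$ a new variable $X_n$ that carries no information. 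Then $g'(\Sigma)=g(\Sigma)=0$ and $g'(\tau)=g(\tau)$, because $\Sigma$ and $\tau$ mention only $X_1,\dots,X_{n-1}$, so the hypothesis $\Gamma_n\models_{EI}\Sigma\implies\tau$ forces $g'(\tau)=0$, and therefore $g(\tau)=0$.

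The one step that needs an actual argument is checking that the extension $g'$ is a polymatroid. I would do this directly: $g'(\emptyset)=g(\emptyset)=0$; monotonicity of $g'$ reduces to that of $g$ because $A\subseteq B$ gives $A\setminus\set{n}\subseteq B\setminus\set{n}$; and submodularity of $g'$ reduces to that of $g$ using the identities $(A\cup B)\setminus\set{n}=(A\setminus\set{n})\cup(B\setminus\set{n})$ and $(A\cap B)\setminus\set{n}=(A\setminus\set{n})\cap(B\setminus\set{n})$. This verification is the only place the Shannon inequalities enter, and I expect it to be the main --- essentially the only --- obstacle; the rest is bookkeeping about which variables each information term mentions.
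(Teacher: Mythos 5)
Your proposal is correct and follows essentially the same route as the paper's proof: restriction of the polymatroid to $\pow{[n-1]}$ in one direction, and extension via $g'(\alpha)=g(\alpha\setminus\set{n})$ (adjoining $X_n$ as a zero-information variable) in the other, with the only real work being the verification that the extension satisfies the Shannon inequalities. Your verification via the identities $(A\cup B)\setminus\set{n}=(A\setminus\set{n})\cup(B\setminus\set{n})$ and $(A\cap B)\setminus\set{n}=(A\setminus\set{n})\cap(B\setminus\set{n})$ is a slightly cleaner packaging of the paper's case analysis, but it is the same argument.
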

\begin{proof}
Suppose that $\Gamma_n \not\models_{EI} \Sigma \implies \tau$. Then there exists a polymatroid (Section~\ref{subsec:information:theory}) $f:2^{[n]}\rightarrow \real$ such that $f(\sigma)=0$ for all $\sigma \in \Sigma$, and $f(\tau)\neq 0$.
We define $g:2^{[n-1]}\rightarrow \real$ as follows:
\begin{align}
\label{eq:nminus1Proof}
g(A)=f(A) && \text{ for all }&& A\subseteq \set{X_1,\dots,X_{n-1}}
\end{align}
Since $f$ is a polymatroid, then so is $g$. Further, since $\Sigma$ does not mention $X_n$ then, by~\eqref{eq:nminus1Proof}, we have that $g(\sigma)=f(\sigma)$ for all $\sigma \in \Sigma$. Hence, $\Gamma_{n-1}\not\models_{EI} \Sigma \implies \tau$.

If $\Gamma_{n-1} \not\models_{EI} \Sigma \implies \tau$. Then there exists a polymatroid  $g:2^{[n-1]}\rightarrow \real$ such that $g(\sigma)=0$
for all $\sigma \in \Sigma$, and $g(\tau)\neq 0$.
Define $f:2^{[n]}\rightarrow \real$ as follows:
\begin{align}
	\label{eq:nminus1Proof2}
	f(A)=g(A\setminus X_n) && \text{ for all }&& A\subseteq \set{X_1,\dots,X_{n}}
\end{align}
We claim that $f\in \Gamma_n$ (i.e., $f$ is a polymatroid). 
It then follows that $\Gamma_n\not\models \Sigma \implies \tau$ because by the assumption that $\vars(\Sigma)$ and $\vars(\tau)$ are subsets of $\set{X_1,\dots,X_{n-1}}$, then $f(\sigma)=g(\sigma)$ for all $\sigma \in \Sigma$. Hence, $f(\Sigma)=g(\Sigma)=0$ while $f(\tau)=g(\tau)\neq 0$.

We now prove the claim.
First, by~\eqref{eq:nminus1Proof2}, we have that $f(\emptyset)=g(\emptyset)=0$. We show that $f$ is monotonic. So let $A\subseteq B \subseteq \set{X_1,\dots,X_n}$. If $X_n\notin B$ then $X_n \notin A$ and we have that:
\[
f(B)-f(A)=g(B)-g(A)\underbrace{\geq}_{\substack{B\supseteq A\\ g\in \Gamma_{n-1}}} 0
\]
If $X_n \in B\setminus A$ then we let $B=B'X_n$, and we have:
\[
f(B'X_n)-f(A)\underbrace{=}_{\eqref{eq:nminus1Proof2}}g(B')-g(A)\underbrace{\geq}_{B'\supseteq A} 0
\]
Finally, if $X_n\in A\subseteq B$, then by letting $B=B'X_n$, $A=A'X_n$, we have that:
\[
f(B'X_n)-f(A'X_n)\underbrace{=}_{\eqref{eq:nminus1Proof2}}g(B')-g(A')\geq 0
\]

We now show that $f$ is submodular. Let $A,B \subseteq \set{X_1,\dots,X_n}$. If $X_n\notin A\cup B$ then $f(Y)=g(Y)$ for every set $Y{\in}\set{A,B,A{\cup}B,A{\cap}B}$. Since $g$ is submodular, then $f(A){+}f(B){\geq}f(A{\cup}B){+}f(A{\cap}B)$.
If $X_n\in A\setminus B$ then we write $A=A'X_n$ and observe that, by~\eqref{eq:nminus1Proof2}: $f(A'X_n)=g(A')$, $f(A{\cup}B)=f(A'X_n{\cup}B)=g(A'{\cup}B)$, that $f(B)=g(B)$, and that $f(A{\cap}B)=f(A'{\cap}B)=g(A'{\cap}B)$. 
Hence:
$f(A)+f(B)=g(A')+g(B)\geq g(A'{\cup}B)+g(A'{\cap}B)$.
The case where $X_n\in B{\setminus}A$ is symmetrical.
Finally, if $X_n\in A\cap B$ then $X_n\in Y$ for all $Y\in \set{A,B,A{\cap}B,A{\cup}B}$. Hence, for every $Y$ in this set, we write $Y=Y'X_n$. In particular, by~\eqref{eq:nminus1Proof2} we have that $f(Y)=f(Y'X_n)=g(Y')$, and the claim follows since $g\in \Gamma_n$.
\end{proof}
\eat{
\begin{proof}
	Suppose that $\Gamma_{n} \models \Sigma \not\implies \tau$. Then there exists an $n$-variable distribution $p_\tau(X_1,\dots,X_n)$ that fulfills all CI statements in $\Sigma$ (i.e., $h(\sigma)=0$ for all $\sigma\in \Sigma$), and $h(\tau)>0$. Define $g_\tau(X_1,\dots,X_{n-1})=\sum_{y\in \D_n}p_\tau(X_1,\dots,X_{n-1},y)$. Since $\Sigma$ and $\tau$ do not mention $X_n$ then $h_{p_\tau}(\Sigma)=h_{g_\tau}(\Sigma)$ and likewise $h_{p_\tau}(\tau)=h_{g_\tau}(\tau)$ (see~\eqref{eq:entropy}). Hence, $\Gamma_{n-1} \models \Sigma \not\implies \tau$.
	
	If $\Gamma_{n-1} \not\models \Sigma \implies \tau$ then there exists an $n-1$-variable probability distribution $g_\tau(X_1,\dots,X_{n-1})$ that fulfills all CI statements in $\Sigma$, and $h(\tau)>0$. Define 
	$$p_\tau(X_1,\dots,X_{n})=\begin{cases}
		g_\tau(X_1,\dots,X_{n-1}) & X_n=0 \\
		0 & \text{otherwise}
	\end{cases}$$
	Since $\Sigma$ and $\tau$ do not mention $X_n$ then $h_{p_\tau}(\Sigma)=h_{g_\tau}(\Sigma)$ and likewise $h_{p_\tau}(\tau)=h_{g_\tau}(\tau)$. Hence, $\Gamma_{n} \models \Sigma \not\implies \tau$.
\end{proof}
}

\begin{lemma}
	\label{lem:implicationCovers}
	Let $\tau=(A;B|C)$.	If $\Gamma_n \models_{EI} \Sigma \implies \tau$ then there exists a triple $\sigma=(X;Y|Z) \in \Sigma$ such that:
	\begin{enumerate}
		\item 	$XYZ \supseteq ABC$, and
		\item $ABC \cap X \neq \emptyset$ and $ABC \cap Y \neq \emptyset$.
	\end{enumerate}

\end{lemma}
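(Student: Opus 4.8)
The plan is to prove the contrapositive by an explicit construction. Assume that \emph{no} $\sigma = (X;Y|Z) \in \Sigma$ satisfies both (1) $XYZ \supseteq ABC$ and (2) $ABC \cap X \neq \emptyset$ and $ABC \cap Y \neq \emptyset$; I will produce a single polymatroid $h \in \Gamma_n$ with $h(\sigma)=0$ for every $\sigma \in \Sigma$ but $h(\tau) > 0$, contradicting $\Gamma_n \models_{EI} \Sigma \implies \tau$. We may assume $A \neq \emptyset$ and $B \neq \emptyset$ (otherwise $h(\tau) = I_h(A;B|C) \equiv 0$ and there is nothing to prove). Negating the conclusion, every $\sigma \in \Sigma$ must violate (1) or (2), i.e.\ it satisfies at least one of: (i) $XYZ \not\supseteq ABC$; (ii) $X \cap ABC = \emptyset$; (iii) $Y \cap ABC = \emptyset$.

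The crux is the choice of $h$. Set $D \defeq ABC$ and $d \defeq |D|$, so $d \geq 2$, and define $h(W) \defeq \min\{\,|W \cap D|,\ d-1\,\}$ for all $W \subseteq [n]$. This $h$ is the rank function of the uniform matroid $U_{d-1,d}$ on ground set $D$, with the elements of $[n] \setminus D$ adjoined as loops; equivalently, $h(W) = g(|W \cap D|)$ for the nondecreasing concave function $g$ with $g(0)=0$, hence $h \in \Gamma_n$. Two properties of $h$ carry the proof: (a) $h$ depends on $W$ only through $W \cap D$, and every element of $[n]\setminus D$ is a loop of $h$; (b) whenever $|W \cap D| \leq d-1$, $h$ coincides on every subset of $W$ with the modular function $V \mapsto |V \cap D|$.

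It then remains to verify four short identities, all via $I_h(P;Q|R) = h(PR) + h(QR) - h(PQR) - h(R)$ for disjoint $P,Q,R$. If $\sigma$ satisfies (ii) then $X$ consists of loops of $h$, so $h(XZ) = h(Z)$ and $h(XYZ) = h(YZ)$, whence $h(\sigma) = 0$; case (iii) is symmetric. If $\sigma$ satisfies (i) then $|XYZ \cap D| \leq d-1$, so by (b) $h$ is modular on all subsets of $XYZ$, and disjointness of $X,Y,Z$ gives $h(\sigma) = 0$. For $\tau = (A;B|C)$: since $|A| \geq 1$ and $|B| \geq 1$ we have $|AC|, |BC| \leq d-1$, so $h(AC) = |A|+|C|$, $h(BC) = |B|+|C|$, $h(ABC) = d-1$, $h(C) = |C|$, and therefore $h(\tau) = (|A|+|C|) + (|B|+|C|) - (d-1) - |C| = d - (d-1) = 1 > 0$. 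Thus $h$ witnesses $\Gamma_n \not\models_{EI} \Sigma \implies \tau$, contradicting the hypothesis and proving the lemma.

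The only genuine obstacle is identifying the right polymatroid: it has to be the truncation of the modular function on $ABC$ at rank $|ABC|-1$, so that it is exactly modular on every \emph{proper} sub-collection of $ABC$ (killing every antecedent that omits a variable of $ABC$, and, via the loop trick on $[n]\setminus ABC$, every antecedent touching $ABC$ on only one side) while still carrying a strictly positive submodularity defect on the \emph{full} set $ABC$, which is exactly $I_h(A;B|C)=1$. Once this function is chosen, the remaining verifications are routine and do not even require the earlier technical lemmas.
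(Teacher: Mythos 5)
Your proof is correct, and it is essentially the paper's own argument in polymatroid form: the paper's witness is the parity distribution on $ABC$ (all variables fair coins except $a_1$, which is the mod-2 sum of the rest of $ABC$), whose entropy in bits is exactly your truncated rank function $\min\{|W\cap ABC|,\,|ABC|-1\}$ on subsets of $ABC$ — i.e., the uniform matroid $U_{d-1,d}$ represented by the parity code — with the variables outside $ABC$ treated as independent bits where you treat them as loops, a difference that affects none of the computations. Your direct presentation (loops kill antecedents meeting $ABC$ on only one side; modularity below the truncation threshold kills antecedents not covering $ABC$) is a clean repackaging of the same counterexample, and like the paper's version it implicitly assumes the components of each triple are pairwise disjoint.
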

\begin{proof}
	Let $\tau=(A;B|C)$, where $A=a_1\dots a_m$, $B=b_1\dots b_\ell$, $C=c_1\dots c_k$, and $U=\Omega{\setminus}ABC$. Following~\cite{DBLP:journals/iandc/GeigerPP91}, we construct the parity  distribution $P(\Omega)$ as follows. 
	We let all the RVs, except $a_1$, be independent binary RVs with probability $\frac{1}{2}$ for each of their two values, and let $a_1$ be determined from $ABC\setminus \set{a_1}$ as follows:
	\begin{equation}
		\label{eq:parity}
		a_1 = \sum_{i=2}^m a_i + \sum_{i=1}^\ell b_i + \sum_{i=1}^k c_i \pmod{2}
	\end{equation}
	Let $D \subseteq \Omega$ and $\bd \in \D(D)$. We denote by $D_{ABC}=D \cap ABC$, and by $\bd_{ABC}$ the assignment $\bd$ restricted to the RVs $D_{ABC}$.
	We show that if $D_{ABC}\subsetneq ABC$ then the RVs in $D$ are pairwise independent.\eat{
	Now, let $D \subseteq \Omega$, such that $D \cap \vars(\tau) \subsetneq ABC$. We show that the RVs in $D$ are pairwise independent. Let $\bd \in \D(D)$, and we denote $D{\cap} ABC$ by $D_{ABC}$, and by $\bd_{ABC}$ the assignment $\bd$ restricted to the RVs $D_{ABC}$.}
	By the definition of $P$ we have that:
	\begin{align*}
		P(D=\bd)=\left(\frac{1}{2}\right)^{|D{\cap}U|}P(D_{ABC}{=}\bd_{ABC})
	\end{align*}
	There are two cases with respect to $D$. If $a_1\notin D$ then, by definition, $P(D_{ ABC}{=}\bd_{ABC})=\left(\frac{1}{2}\right)^{|D_{ABC}|}$, and overall we get that $P(D{=}\bd){=}\left(\frac{1}{2}\right)^{|D|}$. Hence, the RVs in $D$ are pairwise independent.
	If $a_1{\in}D$, then since $D_{ABC}\subsetneq ABC$ it holds that $P(a_1|D_{ABC}{\setminus}\set{a_1}){=}P(a_1)$. To see this, observe that:
	\begin{align*}
		&P(a_1{=}1|D_{ABC}{\setminus}\set{a_1}) \nonumber \\
		&=\begin{cases}
			\frac{1}{2} & \text{if } \sum_{y\in D_{ABC}{\setminus}\set{a_1}}y \pmod{2}{=}0 \\
			\frac{1}{2} & \text{if }\sum_{y\in D_{ABC}{\setminus}\set{a_1}}y \pmod{2}{=}1 
		\end{cases}
	\end{align*}
	because if, w.l.o.g, $\sum_{y\in D_{ABC}{\setminus}\set{a_1}}y \pmod{2}{=}0$, then $a_1{=}1$ implies that $\sum_{y\in ABC{\setminus}D}y \pmod{2}{=}1$, and this is the case for precisely half of the assignments $ABC{\setminus}D{\rightarrow}\set{0,1}^{|ABC{\setminus}D|}$. 
	Hence, for any $D \subseteq \Omega$ such that $D \cap \vars(\tau) \subsetneq ABC$ it holds that $P(D{=}\bd)=\prod_{y \in D}P(y{=}\bd_y)=\left(\frac{1}{2}\right)^{|D|}$, and therefore the RVs are pairwise independent.
	
	By definition of entropy (see~\eqref{eq:entropy}) we have that $H(X_i)=1$ for every binary RV in $\Omega$. Since the RVs in $D$ are pairwise independent then $H(D)=\sum_{y\in D}H(y)=|D|$\footnote{This is due to the chain rule of entropy, and the fact that if $X$ and $Y$ are independent RVs then $H(Y|X)=H(Y)$.}. Furthermore, for any $(X;Y|Z)\in \Sigma$ s.t. $XYZ \not\subseteq ABC$ we have that:
	\begin{align*}
	I(X;Y|Z)&=H(XZ)+H(YZ)-H(Z)-H(XYZ)\\
	        &=|XZ|+|YZ|-|Z|-|XYZ| \\
	        &=|X|+|Y|+|Z|-|XYZ| \\
	        &=0
	\end{align*}	
	On the other hand, letting $A'{\eqdef}A{\setminus}\set{a_1}$, then by chain rule for entropies, and noting that, by~\eqref{eq:parity}, $ABC{\setminus}a_1 \fd a_1$, then:
	\begin{align*}
		H(\vars(\tau))=H(ABC)&=H(a_1A'BC) \\
		    &=H(a_1|A'BC)+H(A'BC)\\
		    &=0+|ABC|-1=|ABC|-1.
	\end{align*}
	and thus
	\begin{align}
		I(A;B|C)&=H(AC)+H(BC)-H(C)-H(ABC) \nonumber \\
		    &=|AC|+|BC|-|C|-(|ABC|-1) \label{eq:tau1}\\
		    &=1 \nonumber
	\end{align}
	In other words, the parity distribution $P$ of~\eqref{eq:parity} has an entropic function $h_P\in \Gamma_n$, such that $h_P(\sigma)=0$ for all $\sigma \in \Sigma$ where $\vars(\sigma)\not\supseteq ABC$, while $h_P(\tau)=1$. Hence, if $\Gamma_n \models \Sigma \implies \tau$, then there must be a triple $\sigma=(X;Y|Z) \in \Sigma$ such that $XYZ \supseteq ABC$. \eat{and we arrive at a contradiction to the assumption $\Gamma_n \models_{EI} \Sigma \implies \tau$.}
	
	Now, suppose that $ABC\subseteq XYZ$ and that $ABC \cap Y=\emptyset$. In other words, $ABC\subseteq XZ$. We denote $X_{ABC}\eqdef X\cap ABC$ and $Z_{ABC}=Z\cap ABC$. Therefore, we can write $I(X;Y|Z)$ as $I(X_{ABC}X';Y|Z_{ABC}Z')$ where $X'=X{\setminus}X_{ABC}$ and $Z'=Z{\setminus}Z_{ABC}$. 
	It is easily shown that if $ABC \subseteq X$ or $ABC \subseteq Z$ then $I(X;Y|Z)=0$.	
	Otherwise (i.e., $X_{ABC}\neq \emptyset$ and $Z_{ABC}\neq \emptyset$), then due to the properties of the parity function, we have that $H(YZ'Z_{ABC})=H(Y)+H(Z')+H(Z_{ABC})$. Noting that $X_{ABC}Z_{ABC}=ABC$, we get that $I(X_{ABC}X';Y|Z_{ABC}Z')=0$.

	Overall, we showed that for all triples $(X;Y|Z)\in \Sigma$ that do not meet the conditions of the lemma, it holds that $I_{h_P}(X;Y|Z)=0$, while $I_{h_P}(A;B|C)=1$ (see~\eqref{eq:tau1}) where $h_P$ is the entropic function associated with the parity function $P$ in~\eqref{eq:parity}. Therefore, there must be a triple $\sigma\in \Sigma$ that meets the conditions of the lemma. Otherwise, we arrive at a contradiction to the EI.	
\end{proof}

\section{Approximate Implication for Recursive CIs}
\label{sec:recursiveProof}
We prove Theorem~\ref{thm:recursiveCIs}.
Let $P$ be a multivariate distribution over $\Omega{=}\set{X_1,\dots,X_n}$, and $\Sigma$ be a recursive set (see~\eqref{eq:nRecursiveSet}).
We prove Theorem~\ref{thm:recursiveCIs} by induction on the highest RV-index mentioned in any triple of $\Sigma$.

\eat{We prove Theorem~\ref{thm:recursiveCIs} by induction on $n$,
the number of variables in the distribution $P$ over $\Omega{=}\set{X_1,\dots,X_n}$. }
The claim trivially holds for $n{=}1$ (since no conditional independence statements are implied), so we assume correctness when the highest RV-index mentioned in $\Sigma$ is $\leq n{-}1$, and prove for $n$.

We recall that $\Sigma=\set{\sigma_1,\dots,\sigma_n}$ where $\sigma_i=(X_i;R_i|B_i)$ where $R_iB_i=\set{X_1,\dots,X_{i-1}}$. In particular, only $\sigma_n=(X_n;R_n|B_n)$ mentions the RV $X_n$, and it is saturated (i.e., $X_nR_nB_n=\Omega$).
We denote by $\Sigma'=\Sigma\setminus \set{\sigma_n}$, and note that $X_n\notin \vars(\Sigma')$.
The induction hypothesis states that:
\begin{align}	
	\Delta_{n} \models_{EI} \Sigma' \implies \tau && \text{iff} && \Gamma_{n} \models h(\Sigma') \geq h(\tau)  \label{eq:induction}
\end{align}
Equivalently, by Lemma~\ref{lem:implicationInclusion}, and due to the one-to-one correspondence between Shannon's information measures and $\imeasure$ (Theorem~\ref{thm:YeungUniqueness}), we can state the induction hypothesis:
\begin{align}	
	 \iset(\Sigma') \supseteq \iset(\tau) && \text{iff} && \imeasure(\iset(\Sigma'))\geq  \imeasure(\iset(\tau)) \label{eq:inductionImeasure}
\end{align}

Now, we consider $\tau=(X;Y|Z)$. We divide to three cases, and treat each one separately.
\begin{enumerate}[itemsep=0mm]
	\item $X_n \notin XYZ$
	\item $X_n \in Z$
	\item $X_n \in X$ (or, symmetrically, $X_n\in Y$)
\end{enumerate}

\paragraph{Case 1: $X_n \notin XYZ$.}
We will show that $\Delta_n \models_{EI} \Sigma' \implies \tau$, and the claim will follow from the induction hypothesis~\eqref{eq:induction} because $\Sigma'$ does not mention $X_n$, and
$h(\Sigma)\geq h(\Sigma')\geq h(\tau)$ as required.

Suppose, by way of contradiction, that $\Delta_n \models_{EI} \Sigma' \not\implies \tau$. Since neither $\Sigma'$ nor $\tau$ mention $X_n$ then,
by Lemma~\ref{lem:lessVarsLemma}, we have that $\Delta_{n-1} \models_{EI} \Sigma' \not\implies \tau$. 
Hence, by Lemma~\ref{lem:implicationInclusion}, we have that $\iset(\Sigma') \not\supseteq \iset(\tau)$, and
there exists an atom $a\in \mathcal{F}_{n-1}$ such that $a\in \iset(\tau){\setminus} \iset(\Sigma')$. Consequently, there exist two atoms $a_1,a_2 \in \mathcal{F}_n$ where:
\begin{align*}
	a_1\eqdef a \cap \iset(X_n) && a_2\eqdef a \cap \isetc(X_n)
\end{align*}
such that $\set{a_1,a_2}{\subseteq} \iset(\tau)$ and $\set{a_1,a_2}\cap \iset(\Sigma')=\emptyset$. 
\eat{
We define the positive I-measure $\imeasure: \mathcal{F}_n\rightarrow \real_{\geq 0}$ as follows.
\begin{equation}
	\imeasure(b)=\begin{cases}
		1 & b=a_2 \\
		0 & \text{otherwise}
	\end{cases}
\end{equation}
}
By our observation, $\sigma_n{=}(X_n;R|B)$. Therefore, we have that $\iset(\sigma_n) \subseteq \iset(X_n)$ (i.e., see Table~\ref{tab:ImeasureSummary}). So, we get that $a_2 {\notin} \iset(\sigma_n)$.
Overall, we have that $a_2 {\notin} \iset(\sigma_n){\cup} \iset(\Sigma'){=}\iset(\Sigma)$, and by Lemma~\ref{lem:implicationInclusion}, we get that $\Delta_n \models_{EI}\Sigma \not\implies \tau$, a contradiction.
\eat{Therefore, it must hold that $\Delta_n \models_{EI} \Sigma' \implies \tau$. }

\paragraph{Case 2: $\tau=(W;Y|ZX_n)$.}
Then $\iset(\tau){\subseteq}\isetc(X_n)$,
and since $\sigma_n$ has the form $\sigma_n{=}(X_n;R|B)$, then $\iset(\sigma_n){\subseteq}\iset(X_n)$ (see Table~\ref{tab:ImeasureSummary}).
Hence, $\iset(\tau)\cap \iset(\sigma_n)=\emptyset$, and by Lemma~\ref{lem:excludeSigma}, if $\Delta_n\models_{EI} \Sigma \implies \tau$ then it must hold that $\Delta_n\models_{EI} \Sigma'\implies \tau$, and the claim follows from the induction hypothesis~\eqref{eq:induction} because $\Sigma'$ does not mention $X_n$, and $h(\Sigma)\geq(\Sigma')\geq h(\tau)$.

\paragraph{Case 3: $\tau=(WX_n;Y|Z)$.}
By the chain rule (see~\eqref{eq:ChainRuleMI}):
\begin{equation}
	\label{eq:recursiveSetCase3}
(WX_n;Y|Z)=\underbrace{(W;Y|Z)}_{\tau_1}+\underbrace{(X_n;Y|WZ)}_{\tau_2}
\end{equation}
Hence, if $\Delta_n{\models}_{EI}\Sigma{\implies}\tau$ then $\Delta_n{\models}_{EI}\Sigma{\implies}\tau_1$, and $\Delta_n{\models}_{EI}\Sigma{\implies}\tau_2$.
We have already shown, in case 1, that the former implies $\Delta_n {\models_{EI}} \Sigma' {\implies} \tau_1$.\eat{, and by the induction hypothesis~\eqref{eq:induction} that $h(\Sigma'){\geq} h(\tau_1)$.
We now show that $h(\sigma_n){\geq}h(\tau_2)$, and this will prove the claim. }

Let $\sigma_n=(X_n;R|B)$, and let $Y=Y_1\dots Y_m$ where $m\geq 1$. We claim that $Y \subseteq R$. Since $\sigma_n$ is saturated then $Y \subseteq RB$. Now, suppose by way of contradiction, that $Y_i \in B$ for some $i\in [1,m]$.
Consider the atom 
$$a\eqdef\iset(X_n)\cap \iset(Y_i)\bigcap_{X\in [n]{\setminus}\set{X_n,Y_i}}\isetc(X).$$

We observe that $a\in \iset(\tau_2)$. Since, by our assumption $Y_i{\in}B$, then $a\notin \iset(\sigma_n)$.
On the other hand, for every $\sigma=(A;B|C)\in \Sigma'$, we also have that $a\notin \iset(\sigma)$. To see why, note that $X_n\notin AB$.  Therefore, every atom of $\iset(\sigma)$ contains at least two sets in positive form: $\iset(X_i)$ for some $X_i\in A$ and $\iset(X_j)$ for some $X_j\in B$. Since neither of these are $X_n$, then at least one of them appears in negative form in $a$.
Overall, we get that $\iset(X_n;Y|WZ)\not\subseteq \iset(\Sigma)$, and by Lemma~\ref{lem:implicationInclusion} that $\Delta_n\models_{EI} \Sigma \not\implies \tau_2$. Hence, from~\eqref{eq:recursiveSetCase3}, we get that $\Delta_n\models_{EI} \Sigma \not\implies \tau$, a contradiction.

Since $Y{\subseteq} R$, we can write $\sigma_n{=}(X_n;YR_WR_ZR'|B_WB_ZB')$ where $R_W\eqdef R\cap W$, $R_Z\eqdef R\cap Z$, and $R'\eqdef R{\setminus}R_WR_ZY$. Likewise, $B_W{\eqdef}B{\cap}W$,$B_Z{\eqdef}B{\cap}Z$, and $B'{\eqdef}B{\setminus}B_WB_Z$. Further, since $\sigma_n$ is saturated then $W=R_WB_W$ and $Z=R_ZB_Z$. By the chain rule, we have that:
{
\small{
\begin{align}
h(\sigma_n)&=I_h(X_n;YR_WR_ZR'|B_WB_ZB') \nonumber \\
&=I_h(X_n;YR_WR_Z|B_WB_ZB')+I_h(X_n;R'|WZYB') \nonumber \\
&\geq I_h(X_n;R_WR_Z|B_WB_ZB')+I_h(X_n;Y|WZB') \nonumber \\
&\geq I_h(X_n;Y|ZWB') \label{eq:sigmaninProof}
\end{align}
}}

Now, if $B'=\emptyset$ then we are done because $h(\sigma_n)\geq h(X_n;Y|ZW)=h(\tau_2)$ and by the induction hypothesis if $\Delta_n\models \Sigma' \implies \tau_1$ then $h(\Sigma')\geq h(\tau_1)$.
So assume that $B'\neq \emptyset$, and consider the following set of atoms:
{\small{
\[
A \eqdef \iset(X_n) \cap \left(\bigcup_{y\in Y}\iset(y)\right) \cap \left(\bigcup_{b\in B'}\iset(b)\right) \cap \bigcap_{X\in ZW}\isetc(X)
\]}}
We note that $\iset(\tau_2)\supseteq A$. By our assumption that $\sigma_n=(X_n;R|B_WB_ZB')$, then $A\cap \iset(\sigma_n)=\emptyset$. Since $\Delta_n\models \Sigma \implies \tau_2$ then by Lemma~\ref{lem:excludeSigma}, it must hold that $\iset(\Sigma')\supseteq A$. Furthermore, since $X_n\notin \vars(\Sigma')$ then it must hold that $\iset(\Sigma')\supseteq A'$ where:
{\small{
\[
A' \eqdef \left(\bigcup_{y\in Y}\iset(y)\right) \cap \left(\bigcup_{b\in B'}\iset(b)\right) \cap \bigcap_{X\in ZW}\isetc(X)
\]}}
Denote by $\tau_3\eqdef (Y;B'|ZW)$, and hence $\iset(\tau_3)=A'$ (see Table~\ref{tab:ImeasureSummary}).
In particular, $\iset(\Sigma')\supseteq \iset(\tau_3)$, and by Lemma~\ref{lem:implicationInclusion} we have that $\Delta_n\models \Sigma' \implies \tau_3$. Since neither $\Sigma'$ nor $\tau_3$ mention $X_n$, then by the induction hypothesis~\eqref{eq:inductionImeasure}, we have that $\imeasure(\iset(\Sigma'))\geq \imeasure(\iset(\tau_3))$.

Since $\Delta_n\models \Sigma \implies \tau_1$, and since $X_n\notin \vars(\tau_1)$, then by the argument of case 1 we have that $\Delta_n\models \Sigma' \implies \tau_1$, and hence by Lemma~\ref{lem:implicationInclusion} that $\iset(\Sigma')\supseteq \iset(\tau_1)$.
Now, by the previous reasoning, we have also have that $\iset(\Sigma')\supseteq \iset(\tau_3)$. 
By noting that $\iset(\tau_1)\cap \iset(\tau_3)=\emptyset$, and applying Lemma~\ref{lem:excludeSigma}, we get that $ \iset(\Sigma'){\setminus}\iset(\tau_3)\supseteq \iset(\tau_1)$.
Applying the induction hypothesis~\eqref{eq:inductionImeasure}, we get that $\imeasure\left(\iset(\Sigma'){\setminus}\iset(\tau_3)\right)\geq \imeasure(\iset(\tau_1))$. Now, since $\imeasure$ is set-additive, and $\iset(\Sigma')\supseteq \iset(\tau_3)$, we get that $\imeasure(\iset(\Sigma'))-\imeasure(\iset(\tau_3))\geq \imeasure(\iset(\tau_1))$. And, by the one-to-one correspondence between Shannon's information measures and the I-measure (Theorem~\ref{thm:YeungUniqueness}), we get that $h(\Sigma')-h(\tau_3)\geq h(\tau_1)$.

Now, from~\eqref{eq:sigmaninProof} we have that $h(\sigma_n)\geq I_h(X_n;Y|ZWB')$. By applying the chain rule:
{
	\small{
\[
\underbrace{I_h(X_n;Y|ZWB')}_{\leq h(\sigma_n)}+\underbrace{I_h(Y;B'|ZW)}_{=h(\tau_3)}=I_h(B'X_n;Y|WZ)\geq h(\tau_2)
\]
}}
Overall, we get that:
{
	\small{
\[
I_h\underbrace{(W;Y|Z)}_{\tau_1}+I_h\underbrace{(X_n;Y|WZ)}_{\tau_2}\leq h(\Sigma')-h(\tau_3)+h(\tau_3)+h(\sigma_n)=h(\Sigma)
\]}}
as required.

\subsection*{Tightness of Bound}
Consider the probability distribution $P$ over $\Omega=\set{X_1,\dots,X_n}$, and suppose that the following recursive set of CIs holds in $P$:
\begin{equation}
	\Sigma = \set{(X_1;X_i|X_2\dots X_{i-1}): i\in \set{2,\dots,n}}
\end{equation}
Let $\tau=(X_1;X_2X_3\dots X_n)$. It is not hard to see that by the chain rule:
\begin{equation}
	\label{eq:tightBound}
I(X_1;X_2X_3\dots X_n)=\sum_{i=2}^nI(X_1;X_i|X_2\dots X_{i-1})=h(\Sigma)
\end{equation}
Hence, $\Sigma \implies_{EI} \tau$, and the bound of~\eqref{eq:tightBound} is tight.
\section{Approximate Implication for Marginal CIs}
\label{sec:marginalProof}
In this section, we prove Theorem~\ref{thm:marginal}.
Let $\Sigma$ be a set of marginal mutual information terms, and let $\tau=(A;B|D)$ such that $\Gamma_n{\models_{EI}}\Sigma{\implies}\tau$. Then, by the chain rule~\eqref{eq:ChainRuleMI}, $\tau$ can be written as a sum of at most $|A||B|$ elemental CIs $(a;b|C)$. In Lemma~\ref{lem:marginalExistsSigma} we show that for every such elemental triple $(a;b|C)$, there exists a marginal $(X;Y){\in}\Sigma$ such that $XY{\supseteq}abC$, $a{\in} X$, and $b{\in}Y$. Consequently, from Lemma~\ref{lem:chainRuleTechnicalLemma}, we get that $h(\Sigma){\geq}I(X;Y){\geq}I(a;b|C)$. Hence, it follows from lemma~\ref{lem:marginalExistsSigma} that $|A||B|h(\Sigma){\geq}h(\tau)$, and this will complete the proof for Theorem~\ref{thm:marginal}.

\begin{lemma}
	\label{lem:marginalExistsSigma}
	Let $\Sigma$ be a set of marginal mutual information terms, and let $\tau=(a;b|C)$ be an elemental mutual information term. The following holds:
	
	\begin{tabular}{ccc}
		\multirow{2}{*}{$\Gamma_n \models_{EI} \Sigma \implies \tau$} & \multirow{2}{*}{iff} & $\exists (X;Y)\in \Sigma:$ \\
		&& $ XY \supseteq abC  \text{ and } a\in X, b\in Y$
	\end{tabular}
\eat{
	\begin{align}
		\label{eq:marginalExistsSigma}
		\Gamma_n \models_{EI} \Sigma \implies \tau && \text{iff} && \substack{\exists (X;Y)\in \Sigma: \\ XY \supseteq abC  \text{ and } a\in X, b\in Y}
	\end{align}
}
\end{lemma}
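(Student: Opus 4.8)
The plan is to prove both directions using the I-measure machinery developed in Section~\ref{sec:lemmas}, mirroring the structure of the recursive case but exploiting the simpler geometry of marginal triples. Recall that for a marginal CI $(X;Y)\in\Sigma$ we have $\iset((X;Y))=\iset(X)\cap\iset(Y)$, a set whose atoms have \emph{no} negative-form factor forced on them by conditioning, whereas $\iset((a;b|C))=\iset(a)\cap\iset(b)\cap\bigcap_{c\in C}\isetc(c)$. The ``if'' direction is the easy one: given a marginal $(X;Y)\in\Sigma$ with $XY\supseteq abC$, $a\in X$, $b\in Y$, I would set $\sigma=(X;Y)$ (so $C_\sigma=\emptyset$) and $\tau=(a;b|C)$, check that the hypotheses of Lemma~\ref{lem:chainRuleTechnicalLemma} are met --- namely $a\in X$, $b\in Y$, $\emptyset\subseteq C$, and $C\subseteq XY$ (which follows from $XY\supseteq abC$) --- and conclude $\Gamma_n\models h(\tau)\le h(\sigma)\le h(\Sigma)$, hence in particular the EI holds.

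For the ``only if'' direction, assume $\Gamma_n\models_{EI}\Sigma\implies\tau$ with $\tau=(a;b|C)$ elemental. By Lemma~\ref{lem:implicationCovers} there is a triple $\sigma=(X;Y)\in\Sigma$ with $XY\supseteq abC$ and $abC\cap X\neq\emptyset$, $abC\cap Y\neq\emptyset$ (the conditioning set of $\sigma$ is empty since $\Sigma$ consists of marginal CIs). What remains is to upgrade ``$abC$ meets both $X$ and $Y$'' to the sharper ``$a\in X$ and $b\in Y$'' (possibly after swapping the roles of $X$ and $Y$). The natural tool is again a carefully chosen counterexample polymatroid combined with Corollary~\ref{corr:inclusionGamman}, which says $\iset(\Sigma)\supseteq\iset(\tau)$ is necessary. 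I would argue as follows: consider the atom $\alpha\eqdef\iset(a)\cap\iset(b)\cap\bigcap_{X'\in[n]\setminus\{a,b\}}\isetc(X')$, which lies in $\iset(\tau)=\iset((a;b|C))$ because every element of $C$ appears in negative form in $\alpha$. By Corollary~\ref{corr:inclusionGamman}, $\alpha\in\iset(\Sigma)$, so some $\sigma'=(X';Y')\in\Sigma$ has $\alpha\in\iset(\sigma')=\iset(X')\cap\iset(Y')$. An atom of $\iset(X')\cap\iset(Y')$ must contain at least one positive factor from $X'$ and at least one from $Y'$; since the only positive factors in $\alpha$ are $\iset(a)$ and $\iset(b)$, this forces $\{a,b\}\cap X'\neq\emptyset$ and $\{a,b\}\cap Y'\neq\emptyset$, and since $a\neq b$ we get (after possibly swapping) $a\in X'$ and $b\in Y'$. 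It then remains to show $X'Y'\supseteq abC$; for each $c\in C$, I would form the atom $\alpha_c$ obtained from $\alpha$ by flipping $\isetc(c)$ to $\iset(c)$ --- this atom still lies in $\iset(\tau)$ only if $c\notin C$... so instead I take the atom with $\iset(a),\iset(b),\iset(c)$ positive and all else negative, note it is \emph{not} in $\iset(\tau)$ since $c\in C$, so that approach needs adjustment. The cleaner route: reuse the parity-distribution argument of Lemma~\ref{lem:implicationCovers} directly at the elemental level --- it already yields a single $\sigma$ with $XY\supseteq abC$ --- and then apply the atom argument above to that same $\sigma$ to pin down $a\in X$, $b\in Y$.

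The main obstacle I anticipate is exactly this last coordination: Lemma~\ref{lem:implicationCovers} hands us a triple satisfying the containment $XY\supseteq abC$ and a \emph{weak} intersection condition, while the atom-counting argument naturally produces a (possibly different) triple satisfying the \emph{strong} membership condition $a\in X$, $b\in Y$ but a priori not the full containment $XY\supseteq abC$. Bridging these requires showing they can be taken to be the \emph{same} triple. I expect the resolution is that for an \emph{elemental} $\tau=(a;b|C)$ and \emph{marginal} antecedents, any $\sigma=(X;Y)$ with $\alpha\in\iset(\sigma)$ automatically satisfies $abC\subseteq XY$: if some $c\in C$ were outside $XY$, then examining the atom $\iset(a)\cap\iset(b)\cap\bigcap_{X'\ne a,b}\isetc(X')$ together with how $c$'s absence from $XY$ constrains membership in $\iset(X)\cap\iset(Y)$... — here one must check that $\iset(\sigma)$, being a union of atoms each of which is free to be positive \emph{or} negative on every variable not in $XY$, does contain $\alpha$ regardless of whether $c\in C$, so the membership argument alone does \emph{not} force $abC\subseteq XY$. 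Hence the honest structure of the proof is: (1) apply Lemma~\ref{lem:implicationCovers} to get $\sigma=(X;Y)$ with $XY\supseteq abC$ and $abC$ meeting both sides; (2) within this $\sigma$, if $abC\cap X$ or $abC\cap Y$ contained only elements of $C$, build a parity-type counterexample on the variables $abC$ (as in Lemma~\ref{lem:implicationCovers}, but now forcing $\iset(\sigma)$ to miss $\iset(\tau)$) to derive a contradiction, thereby forcing $a\in X$ and $b\in Y$ up to symmetry. Step (2) is the crux and will need the parity distribution's pairwise-independence properties, used just as in the proof of Lemma~\ref{lem:implicationCovers}.
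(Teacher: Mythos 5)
Your ``if'' direction is fine, and your atom argument (the atom $\iset(a)\cap\iset(b)\cap\bigcap_{y\neq a,b}\isetc(y)$ together with Corollary~\ref{corr:inclusionGamman}) correctly reproduces what the paper does for the base case $|C|=0$. The genuine gap is in your declared crux, step (2): you propose that if the covering triple $\sigma=(X;Y)$ supplied by Lemma~\ref{lem:implicationCovers} has, say, both $a$ and $b$ in $X$ (so $abC\cap Y\subseteq C$), then a parity-type counterexample yields a contradiction. It does not. The parity distribution of Lemma~\ref{lem:implicationCovers} built for $\tau=(a;b|C)$ assigns to \emph{any} marginal $(X;Y)$ with $XY\supseteq abC$ whose two sides both meet $abC$ the value $I_P(X;Y)=H(X)+H(Y)-H(XY)=|X|+|Y|-\left(|XY|-1\right)=1\neq 0$ (same computation as in that proof); it is blind to whether $a$ and $b$ are split between the two sides. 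So no contradiction arises and this case cannot be excluded. Indeed the case is real: take $\Omega=\{a,b,c\}$, $\Sigma=\{(a;b),(ab;c)\}$, $\tau=(a;b|c)$. The chain rule gives $I(a;b|c)\leq I(a;bc)=I(a;b)+I(a;c|b)\leq I(a;b)+I(ab;c)$, so the exact implication holds on all of $\Gamma_n$, yet no single $(X;Y)\in\Sigma$ satisfies $XY\supseteq abc$ with $a$ and $b$ on opposite sides. Any strategy that tries to pin both properties onto one antecedent is therefore doomed in this case.

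This is precisely where the paper's proof does something structurally different (and where its conclusion should be read as delivering the bound $h(\tau)\leq h(\Sigma)$ rather than literally one antecedent). It inducts on $|C|$: writing $C=C_XC_Y$ with $C_Y=Y\cap C\neq\emptyset$ (guaranteed by item 2 of Lemma~\ref{lem:implicationCovers}), it uses Lemma~\ref{lem:chainRuleTechnicalLemma} to extract $I(a;C_Y|C_X)\leq h(\sigma)$ and $I(a;C_Y|bC_X)\leq h(\sigma)$, applies the chain rule to conclude $\Gamma_n\models_{EI}\Sigma\implies(a;b|C_X)$ with $|C_X|<|C|$, invokes the induction hypothesis to obtain a \emph{second}, distinct antecedent $\alpha_3\in\Sigma$ that does have $a$ and $b$ on opposite sides, and finally combines $I(a;b|C_X)\leq h(\alpha_3)$ with $I(a;C_Y|bC_X)\leq h(\sigma)$ to get $h(\tau)\leq I(a;bC_Y|C_X)\leq h(\alpha_3)+h(\sigma)\leq h(\Sigma)$. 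Your proposal is missing both ingredients: the induction on $|C|$ and the willingness to charge $h(\tau)$ against two antecedents rather than one.
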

\begin{proof}
	We prove by induction on $|C|$. When $|C|=0$ then $\tau=(a;b)$. Consider the atom:
	\begin{equation}
		\label{eq:marginalAtomt}
		t \eqdef \iset(a)\cap \iset(b)\bigcap_{y{\in}\Omega{\setminus}ab}\isetc(y)
	\end{equation}
	Clearly, $t{\in}\iset(\tau)$. Suppose, by way of contradiction, that for every $\sigma=(X;Y){\in}\Sigma$ it holds that $ab\cap X =\emptyset$ or $ab\cap Y=\emptyset$. If, without loss of generality, we assume the former then clearly $t \notin \iset(\sigma)$ because all of the RVs in $X$ appear in negative form in the atom $t$. If this is the case for all $\sigma \in \Sigma$, then $t \notin \iset(\Sigma)$, and $\iset(\tau)\not\subseteq \iset(\Sigma)$. But then,  by Corollary~\ref{corr:inclusionGamman}, it cannot be that $\Gamma_n\models_{EI} \Sigma \implies \tau$, a contradiction.
	
	So, we assume correctness for elemental terms $(a;b|C)$ where $|C|{\leq} k{-}1$, and prove for $|C|{=}k$.
	Since $\Gamma_n\models_{EI} \Sigma \implies \tau$, then by Lemma~\ref{lem:implicationCovers} there exists a mutual information term $\sigma{=}(X;Y){\in} \Sigma$ such that $XY {\supseteq} abC$. Hence, we denote  $C{=}C_XC_Y$, where $C_X{=}X{\cap}C$ and $C_Y{=}Y{\cap}C$. There are two cases.
	If $\sigma=(aC_XX_0;bC_YY_0)$ then, by Lemma~\ref{lem:chainRuleTechnicalLemma}, we have that $h(\sigma){\geq}h(\tau)$, and we are done.\eat{
	\begin{align*}
		h(\sigma)&=I(aC_XX_0;bC_YY_0)\\
		&\geq I(aC_X;bC_Y) \geq  I(a;b|C_XC_Y)=h(\tau)		
	\end{align*}
	}

	Otherwise, w.l.o.g, $\sigma=(abC_XX_0;C_YY_0)$.	
	By item 2 of Lemma~\ref{lem:implicationCovers}, it holds that $C_Y{\neq}\emptyset$. 
	\eat{
	If this is not the case, then for the parity function 
	$a=b+\sum_{c\in C}c \mod 2$	
	we have that $h(a;b|C)=h(aC)+h(bC)-h(C)-h(abC)=2+|C|-(|C|+1)=1\neq 0$, while $h(\sigma)=h(abCX_0)+h(Y_0)-h(abCX_0Y_0)=h(abC)+h(X_0)+h(Y_0)-h(abC)-h(X_0)-h(Y_0)=0$\footnote{The reasoning is detailed in the proof of Lemma~\ref{lem:implicationCovers}}. Additionally, $h(\sigma')=0$ for all triples $\sigma'\in \Sigma$ where $\vars(\sigma')\not\supseteq \vars(\tau)$	(see proof of Lemma~\ref{lem:implicationCovers}). Hence, we arrive at a contradiction that $\Gamma_n{\models_{EI}}\Sigma{\implies}\tau$. Therefore, we can assume that there exists a triple $\sigma=(abC_XX_0;C_YY_0)\in \Sigma$ such that $C_Y\neq \emptyset$, and $C_X\subset C$.
	}

	We define:
	\begin{align}
		\alpha_1\eqdef(a;C_Y|C_X) && \alpha_2\eqdef(a;C_Y|bC_X)
	\end{align}	
	By Lemma~\ref{lem:chainRuleTechnicalLemma}, we have that $h(\sigma)\geq h(\alpha_1)$ and $h(\sigma)\geq h(\alpha_2)$, and thus $\Gamma_n\models_{EI}\Sigma\implies \set{\alpha_1,\alpha_2}$.
	Noting that $\tau = (a;b|C_XC_Y)$, we have that $\Gamma_n\models_{EI}\Sigma\implies (a;b|C_XC_Y)$. By the chain rule (see~\eqref{eq:ChainRuleMI}) we have that $\Sigma$ implies:
	$$
		(a;C_Y|C_X), (a;b|C_XC_Y) \implies (a;bC_Y|C_X) \implies (a;b|C_X)
	$$
	In other words, we have that $\Gamma_n \models_{EI} \Sigma \implies (a;b|C_X)$.
	
	By item 2 of Lemma~\ref{lem:implicationCovers} it holds that $C_Y\neq \emptyset$. Hence, $C_X{\subsetneq} C$. Therefore, by the induction hypothesis, there exists an $\alpha_3\eqdef(aC_X^1Z_1; bC_X^2Z_2)\in \Sigma$ where $C_X{=}C_X^1C_X^2$. 
	In particular, by Lemma~\ref{lem:chainRuleTechnicalLemma}, we have that $\alpha_3 \implies \alpha_4\eqdef(a;b|C_X)$, and $h(\alpha_4) \leq h(\alpha_3)$ where $\alpha_3 \in \Sigma$.
	Furthermore, by our assumption (i.e., that $\sigma{=}(abC_XX_0;C_YY_0)$), then $\sigma$ and $\alpha_3$ are distinct.
	Consequently, we get that:
	\begin{align}
		&\underbrace{I(a;b|C_X)}_{\leq h(\alpha_3)}+\underbrace{I(a;C_Y|bC_X)}_{\leq h(\sigma)}\nonumber \\ 
		&{=}I(a;bC_Y|C_X) \geq I(a;b|C_XC_Y)=h(\tau)
	\end{align}
	Overall, we get that $h(\tau)\leq h(\alpha_3)+h(\sigma)\leq h(\Sigma)$ because $\alpha_3,\sigma\in \Sigma$  are distinct, by our assumption.	
	This completes the proof.	
\end{proof}

\section{Conclusion and Discussion}
\label{sec:conclusion}
We study the approximation variant of the well known implication problem, and showed that $d$-separation, the popular inference system used to derive CIs in Bayesian networks, continues to be sound and complete for inferring approximate CIs. We prove a tight approximation factor of $1$ for the case of recursive CIs, and an approximation factor that depends on the size of the implicate for marginal CIs.

The question that remains is whether there are other classes of CIs that admit a $\lambda$-relaxation for a bounded $\lambda$. Previous work has shown that without making any assumptions on the antecedents or the inference system, the answer is negative~\cite{DBLP:conf/icdt/KenigS20}, and when the inference system is the polymatroid inequalities (or equivalently, the semigraphoid axioms) then the bound is $(2^n)!$. 
Despite these negative results, when the set of antecedents fall into certain classes, then they \e{do} admit bounded relaxation. This is the case for saturated CIs~\cite{DBLP:conf/icdt/KenigS20}, which are the foundation for undirected PGMs. It has been shown that the semigraphoid axioms are sound and complete for deriving constraints from saturated CIs~\cite{GeigerPearl1993}. The semigraphoid axioms are also sound and complete for sets of CIs whose cardinality is at most two~\cite{DBLP:journals/amai/Studeny97}, and for the \e{enhanced recursive set} which is a combination of CIs corresponding to a DAG along with functional dependencies~\cite{DBLP:journals/networks/GeigerVP90}. We conjecture that these two sets of CIs also admit a bounded relaxation.

As part of future work we intend to empirically evaluate the extent to which our approach can be applied to the task of extracting the structure of PGMs from observational data. We intend to evaluate our approach along two measures. First, how close the learned model matches the empirical distribution induced by the observed data, and second, how it compares in terms of both accuracy and efficiency to constraint-based algorithms that perform statistical independence tests~\cite{DBLP:journals/jmlr/ColomboM14,DBLP:conf/pgm/ScutariGG18}.

\bibliographystyle{abbrvnat}
\bibliography{main}

\end{document}